\icmltitlerunning{Learning Discrete and Continuous Factors of Data via Alternating Disentanglement}
\begin{document}

\twocolumn[
\icmltitle{Learning Discrete and Continuous Factors of Data\\ via Alternating Disentanglement}



\icmlsetsymbol{equal}{*}

\begin{icmlauthorlist}
\icmlauthor{Yeonwoo Jeong}{snu}
\icmlauthor{Hyun Oh Song}{snu}
\end{icmlauthorlist}

\icmlaffiliation{snu}{Department of Computer Science and Engineering, Seoul National University, Seoul, Korea}
\icmlcorrespondingauthor{Hyun Oh Song}{hyunoh@snu.ac.kr}

\icmlkeywords{Disentanglement, VAE}

\vskip 0.3in
]



\printAffiliationsAndNotice{}  

\begin{abstract}
We address the problem of unsupervised disentanglement of discrete and continuous explanatory factors of data. We first show a simple procedure for minimizing the total correlation of the continuous latent variables without having to use a discriminator network or perform importance sampling, via cascading the information flow in the $\beta$-vae framework. Furthermore, we propose a method which avoids offloading the entire burden of jointly modeling the continuous and discrete factors to the variational encoder by employing a separate discrete inference procedure.

This leads to an interesting alternating minimization problem which switches between finding the most likely discrete configuration given the continuous factors and updating the variational encoder based on the computed discrete factors. Experiments show that the proposed method clearly disentangles discrete factors and significantly outperforms current disentanglement methods based on the disentanglement score and inference network classification score. The source code is available at \href{https://github.com/snu-mllab/DisentanglementICML19}{https://github.com/snu-mllab/DisentanglementICML19}.
\end{abstract}
\section{Introduction}\label{sec:intro}

Learning to disentangle the underlying explanatory factors of data without supervision is a crucial task for representation learning in AI-related tasks such as speech, object recognition, natural language processing, and transfer learning \cite{bengio13}. While establishing a clear quantifiable objective is difficult, in a successfully disentangled representation, a single latent unit of the representation should correspond to a change in a single generative factor of the data while being relatively invariant to others.

To this end, a line of research on unsupervised disentanglement has been pursued under the Variational Autoencoder framework \cite{betavae,factorvae,betatc,anchorvae}. The common theme among the recently proposed methods is to penalize the \emph{total correlation} among the latent variables so that the model is encouraged to learn statistically independent factors of data. 

Although penalizing the total correlation is important we argue that this alone is not sufficient for learning disentangled representations. Most recent methods focus on learning only the continuous factors of variation and jointly modeling both the continuous and discrete factors of variation is relatively much less studied. When modeling complex and high-dimensional data such as raw images, it becomes difficult to disentangle the discrete factors of data (\ie~ number of light sources, categorical shape of present objects) from continuous factors (\ie~ translation, rotation, color) under these methods. \citet{aae} have demonstrated that providing the true discrete factors of data to the autoencoder via supervision drastically improves the quality of the learned continuous factors compared to the purely unsupervised case. We hypothesize that lumping both the continuous and discrete factors into a single latent vector and optimizing for the joint variational posterior severely deteriorates the disentanglement performance as this imposes too much modeling burden to the posterior.

In this paper, we first propose a simple procedure for penalizing the total correlation which does not require any extra discriminator network or having to run expensive importance sampling in the $\beta$-VAE framework. Then, we propose an alternating disentanglement method where it alternates between finding the most likely configuration of the discrete factors given the continuous factors and updating the inference parameters given the discrete configuration. The empirical results show that decoupling the disentanglement process for continuous and discrete factors via the proposed alternating method leads to strong disentanglement performance both qualitatively and quantitatively.
 
Our quantitative results on 1) dSprites \cite{dsprites} dataset on the disentanglement evaluation metric by \citep{factorvae}, and on 2) the inference network classification score on the learned discrete factors show state of the art results outperforming recently proposed disentanglement methods: $\beta-$VAE, AnchorVAE, FactorVAE, and JointVAE by a large margin.

\section{$\beta$-VAE and disentanglement}\label{sec:prelim}
We first review and analyze how the $\beta$-VAE framework  relates to disentanglement from information theoretic perspective. VAE is a latent variable model that pairs a top-down decoding generator ($\theta$) and a bottom-up encoding inference network ($\phi$). Then, a variational lower bound of the marginal log-likelihood, $\mathbb{E}_{x \sim p(x)} \log p(x)$, is maximized. Concretely, the VAE objective is,
\small
\begin{align}
\label{eqn:vae}
\mathcal{L}(\theta,\phi) = \mathbb{E}_{x \sim p(x)} \big[&\mathbb{E}_{z \sim q_\phi(\cdot \mid x)} \log p_\theta(x \mid z)\\
&- \beta D_\text{KL}\left(q_\phi(z \mid x) \parallel p(z)\right)\big],\nonumber
\end{align}
\normalsize
where $p(z)$ is the fully factorized standard normal prior. Note, maximizing the objective in \Cref{eqn:vae} can be viewed as maximizing the lower bound on the mutual information between the data $x$ and the latent code $z$ with the KL term. Concretely,
\small
\begin{align}
\label{eqn:mi_beta_upb}
    &I(x;z) - \beta \mathbb{E}_{x\sim p(x)}D_\text{KL}(q_\phi(z\mid x) \parallel p(z))\\
    &=H(x)+\int q_\phi(x,z) \log q_{\phi}(x \mid z) dz dx\nonumber \\
    &\quad- \beta \mathbb{E}_{x\sim p(x)}D_\text{KL}(q_\phi(z\mid x) \parallel p(z))\nonumber\\
    &\geq H(x)+\mathbb{E}_{x\sim p(x)}\big[\mathbb{E}_{z\sim q_\phi(\cdot \mid x)}\log p_{\theta}(x\mid z)\nonumber\\
    &\quad- \beta D_\text{KL}(q_\phi(z\mid x)\parallel p(z))\big],\nonumber
\end{align}
\normalsize
where the data entropy can be ignored as there is no dependence on the parameters. Also, the KL term in \Cref{eqn:vae,eqn:mi_beta_upb} can be factorized as below \cite{factorvae,betatc},
\small
\begin{align}
\label{eqn:kl_factors}
&\mathbb{E}_{x \sim p(x)} D_\text{KL}(q(z| x) ~||~ p(z))=I(x;z) + D_\text{KL}(q(z) || p(z))\\ 
&=I(x;z) + \underbrace{D_\text{KL}(q(z) \parallel \prod_j q(z_j))}_{=~\text{Total correlation},~ TC(z)} + \sum_j D_\text{KL}(q(z_j) \parallel p(z_j)),\nonumber
\end{align}
\normalsize
where $q(z)$ denotes the marginal posterior computed as $q(z) = \mathbb{E}_{x\sim p(x)} q(z\mid x)$. The second term in the factorization is known as \emph{total correlation} (TC) and is a popular measure quantifying the redundancy among a set of $m$ random variables \cite{watanabe1960}. The significance of TC is that penalizing TC causes the model to learn statistically independent factors in the data which is a crucial component in disentangled representations \cite{bengio13}.

\section{Total correlation penalization with information cascading}\label{sec:cascade}

$\beta$-VAE indirectly penalizes $TC(z)$ by increasing the $\beta$ coefficient to a high value in the KL divergence term in \Cref{eqn:vae} \cite{betavae}. However, this inevitably penalizes the $I(x;z)$ term altogether (see \Cref{eqn:kl_factors}) leading to reduction in the amount of information in $z$ about $x$. To address this, FactorVAE decreases $\beta$ and introduces an additional regularization term between the marginal variational posterior and factorized marginal $D_\text{KL}(q(z) \parallel \prod_i^m q(z_i))$ using the density-ratio trick via a separate discriminator network \cite{factorvae}. $\beta$-TCVAE on the otherhand estimates TC via importance sampling within minibatches \cite{betatc}. We first make the following observations and show a simple practical procedure for minimizing TC without having to rely on additional neural networks or sampling procedures in the $\beta$-VAE framework.
\begin{proposition}
\label{prop1}
The mutual information between a single random variable and the rest can be factorized as
\[I(z_{1:i-1}; z_i) = TC(z_{1:i}) - TC(z_{1:i-1})\]
\end{proposition}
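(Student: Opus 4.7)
The plan is to prove the identity by rewriting both $TC$ terms in the form "sum of marginal entropies minus joint entropy" and cancelling common summands. Concretely, using the identity $TC(z_{1:k}) = \sum_{j=1}^{k} H(z_j) - H(z_{1:k})$, which follows directly from the definition $TC(z_{1:k}) = D_{\text{KL}}(q(z_{1:k}) \parallel \prod_j q(z_j))$ by expanding the KL divergence into a difference of expected log-densities, the difference $TC(z_{1:i}) - TC(z_{1:i-1})$ collapses to $H(z_i) - H(z_{1:i}) + H(z_{1:i-1})$.

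Next I would invoke the chain rule for entropy, $H(z_{1:i}) = H(z_{1:i-1}) + H(z_i \mid z_{1:i-1})$, to rewrite the expression as $H(z_i) - H(z_i \mid z_{1:i-1})$. This is precisely the definition of the mutual information $I(z_{1:i-1}; z_i)$, which finishes the proof.

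There is no real obstacle here; the only thing to be careful about is the base case $i=1$, where one should interpret $TC(z_{1:0})=0$ and $I(\emptyset; z_1)=0$ consistently, and to note that the identity $TC(z_{1:k}) = \sum_j H(z_j) - H(z_{1:k})$ holds for the marginal $q(z)$ used throughout Section~\ref{sec:prelim}, so the same decomposition applies verbatim inside the $\beta$-VAE objective. Everything else is just bookkeeping with entropies.
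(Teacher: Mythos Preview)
Your proof is correct. It differs in presentation from the paper's proof: the paper works directly with the KL-divergence integrals, rewriting $TC(z_{1:i})$ and $TC(z_{1:i-1})$ as integrals against $p(z_{1:i})$ (extending the second integral over $z_i$ trivially), then combining the logarithms so that the product $\prod_{j=1}^{i-1} p(z_j)$ cancels and only $\log\frac{p(z_{1:i})}{p(z_{1:i-1})p(z_i)}$ remains, which is $I(z_{1:i-1};z_i)$ by definition. You instead pass through the entropy identity $TC(z_{1:k}) = \sum_j H(z_j) - H(z_{1:k})$ and the chain rule $H(z_{1:i}) = H(z_{1:i-1}) + H(z_i \mid z_{1:i-1})$. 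Both routes are one-line algebra once the right identity is invoked; your version is perhaps more transparent for readers who think in entropies, while the paper's version avoids introducing entropy notation at all and stays in the density-ratio language used elsewhere in Section~\ref{sec:prelim}.
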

\begin{proof}
See supplementary A1. 
\end{proof}
\begin{proposition}
\label{prop2}
The mutual information between $x$ and partitions of $z = [z_1, z_2]$ can be factorized as,
\[I(x; [z_1, z_2]) = I(x;z_1) + I(x; z_2) - I(z_1; z_2)\]
\end{proposition}
\begin{proof}
See supplementary A2. 
\end{proof}
Now, by telescoping sum, we can write,
\small
\begin{align}
\label{eqn:tc_factors}
TC(z) &= \underbrace{TC(z_{1:2})}_{= I(z_1; z_2)} + \sum_{i=3}^m \left( \underbrace{TC(z_{1:i}) - TC(z_{1:i-1})}_{= I(z_{1:i-1}; z_i)} \right)\nonumber\\
&= \sum_{i=2}^m I(z_{1:i-1}; z_i)
\end{align}
\normalsize
This is because the first term in \Cref{eqn:tc_factors} is equal to $I(z_1; z_2)$ by definition of mutual information and the rest of the terms are equal to $I(z_{1:i-1}; z_i)$ by \Cref{prop1}. Now we aim at penalizing $TC(z)$ by sequentially penalizing the individual summand in \Cref{eqn:tc_factors}. From \Cref{prop2}, we can write 
\[I(x;z_{1:i}) = I(x;z_{1:i-1}) + I(x;z_{i}) - I(z_{1:i-1};z_{i})\]
This factorization motivates a maximization algorithm sequentially updating the left hand side $I(x; z_{1:i})$ for all $i=2,\ldots,m$ which in turn minimizes each summand in \Cref{eqn:tc_factors}. Also, from the lower bound of mutual information in \Cref{eqn:mi_beta_upb} we have, 
\[I(x;z_{1:i}) \geq H(x) + \mathbb{E}_x \mathbb{E}_{z_{1:i}\sim q_\phi(\cdot \mid x)} \log p_\theta(x \mid z_{1:i})\]
We maximize $I(x;z_{1:i})$ by maximizing its lower bound $\mathbb{E}_x \mathbb{E}_{z_{1:i}\sim q_\phi(\cdot|x)}p_\theta(x|z_{1:i})$. In practice, we observed it is sufficient to maximize the objective in \Cref{eqn:vae} while  penalizing $z_{i+1:m}$ with a large beta coefficient on $D_\text{KL}(q_{\phi}(z_{i+1:m}\mid x) \parallel p(z_{i+1:m}))$\footnote{Note, under conditional independence, $\beta D_\text{KL}(q_\phi(z_{i+1:m}\mid x) \parallel p(z_{i+1:m})) = \sum_{j=i+1}^m \beta D_\text{KL}(q_\phi(z_j\mid x) \parallel p(z_j))$}. This leads to sequential updates where each latent dimensions are heavily penalized with high $\beta$ in the beginning but are sequentially relieved one at a time with small $\beta$ in a cascading fashion. 

Since this procedure implicitly penalizes the individual summand $I(z_{1:i-1},z_i)$ in the factorization of TC, we can interpret that the method separately controls the information flow one latent variable at a time encouraging the model to encode one statistically independent factor of information per each newly \emph{opened} variable.

To contrast the procedure with related methods, \citet{burgess17} and JointVAE increases the KL capacity term which is \emph{shared} across all latent variables. AnchorVAE sets $\beta$ to a small value for a subset of variables and high value for the rest of the variables but does not dynamically control the information flow \cite{anchorvae}.

\section{Alternating disentanglement of discrete and continuous factors}

\Cref{fig:graphical_models} illustrates the graphical models view of the generative and inference processes for $\beta$-VAE \cite{betavae}, JointVAE \cite{jointvae}, AAE with supervised discrete factors \cite{aae}, and our proposed model respectively. As illustrated in \Cref{fig:graph_jointvae}, JointVAE can be viewed as  augmenting the continuous latent variables with discrete latent variables ($z=[z'; d]$) in the $\beta$-VAE framework. However, simply lumping the latent variables together offloads the entire burden of jointly modeling both the continuous and discrete factors to the variational posterior which can be very challenging.

\begin{figure}[t]
\centering %
\begin{subfigure}[t]{0.23\linewidth}
\centering
\resizebox*{!}{\linewidth}{
\begin{tikzpicture}
    \node[obs] (x) {$x$};%
    \node[latent,above=of x] (z) {$z$}; %
    \edge {z} {x}; %
    \path (x) edge [dashed, bend left=40,->] (z); 
\end{tikzpicture}}
\caption{$\beta$-VAE}
\label{fig:graph_vae}
\end{subfigure}\hspace{-0.5em}\hfill
\begin{subfigure}[t]{0.23\linewidth}
\centering
\resizebox*{!}{\linewidth}{
\begin{tikzpicture}
    \node[obs]                               (x) {$x$};
    \node[latent, above=of x, xshift=-0.8cm] (z) {$z$};
    \node[latent, above=of x, xshift=0.8cm]  (d) {$d$};
    \edge {z,d} {x} ; %
    \path (x) edge [dashed, bend left=40,->] (z);
    \path (x) edge [dashed, bend right=40,->] (d);
\end{tikzpicture}}
\caption{JointVAE}
\label{fig:graph_jointvae}
\end{subfigure}\hspace{1em}\hfill
\begin{subfigure}[t]{0.23\linewidth}
\centering
\resizebox*{!}{\linewidth}{
\begin{tikzpicture}
    \node[obs]                               (x) {$x$};
    \node[latent, above=of x, xshift=-0.8cm] (z) {$z$};
    \node[obs, above=of x, xshift=0.8cm]  (d) {$d$};
    \edge {z,d} {x} ; %
    \path (x) edge [dashed, bend left=40,->] (z);
\end{tikzpicture}}
\caption{AAE-S}
\label{fig:graph_aae}
\end{subfigure}\hspace{1em}\hfill
\begin{subfigure}[t]{0.23\linewidth}
\centering
\resizebox*{!}{\linewidth}{
\begin{tikzpicture}
    \node[obs]                               (x) {$x$};
    \node[latent, above=of x, xshift=-0.8cm] (z) {$z$};
    \node[latent, above=of x, xshift=0.8cm]  (d) {$d$};
    \edge {z,d} {x} ; %
    \path (x) edge [dashed, bend left=40,->] (z);
\end{tikzpicture}}
\caption{Ours}
\label{fig:graph_ours}
\end{subfigure}\hfill
\caption{Graphical models view of $\beta$-VAE, JointVAE, AAE with supervised discrete variables, and our method. Solid lines denote the generative process and the dashed lines denote the inference process. $x,z,d$ denotes the data, continuous latent code, and the discrete latent code respectively.}
\label{fig:graphical_models}
\end{figure}
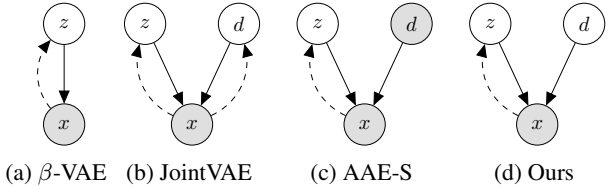

\citet{aae} have investigated learning continuous latent representations while providing the discrete factors through supervision (\ie~providing class labels in MNIST) and demonstrated that the model can learn drastically better continuous representations when the burden of simultaneously modeling the continuous and discrete factors is relieved. Inspired by these findings, our idea is to alternate between finding the most likely discrete configuration of the variables given the continuous factors and updating the parameters ($\phi,\theta$) given the discrete configurations. 

\Cref{fig:graph_aae} and \Cref{fig:graph_ours} illustrate that the generative and inference process between the two methods resemble in the sense that they only encode the continuous factor $z$, while \Cref{fig:graph_jointvae} encodes both the continuous and discrete factors $z, d$ simultaneously. Unlike AAE-S, discrete factors are not observed in our case. 

The joint distributions $q(x,z,d)$ of JointVAE and AAE-S are
\small
\begin{align*}
    q_\phi(x,z,d) = p(x)q_\phi(z,d|x)
\end{align*}
\normalsize
and 
\small
\begin{align*}
    q_\phi(x,z,d)=\begin{cases} q_\phi(x,z) & \text{if}~d=y \\ 0 & \text{otherwise} \end{cases},
\end{align*}
\normalsize
where $y$ is the provided ground truth discrete factors, respectively. We define the joint distribution similar to AAE-S as 
\small
\begin{align*}
    q_{\phi,\theta}(x,z,d)=\begin{cases} q_\phi(x,z) & \text{if}~d=\argmax_{\hat{d}} p_\theta(x \mid z,\hat{d})\\ 0 & \text{otherwise} \end{cases},
\end{align*}
\normalsize
where $q_\phi(x,z)=p(x)q_\phi(z\mid x)$. Likewise, the variational posterior is defined as,
\small
\begin{align*}
    q_{\theta,\phi}(z,d\mid x)=\begin{cases} q_\phi(z\mid x) & \text{if}~d=\argmax_{\hat{d}} p_\theta(x\mid z,\hat{d})\\ 0 & \text{otherwise} \end{cases}.
\end{align*}
\normalsize
Note in our case the inference for the discrete factors involves both the encoder ($\phi$) and the decoder ($\theta$) in contrast to JointVAE. 

Note from the factorization in \Cref{eqn:kl_factors},  the KL term in $\beta$-VAE is factorized as the sum of the mutual information term and the divergence from the marginal posterior to the prior. Since the mutual information for discrete variables, $I(x,d)$ is bounded above by $H(d)$, we only need to consider the prior divergence term. For discrete uniform prior, the following lemma shows a useful upperbound which is easier to optimize directly\footnote{This will be apparent in the following subsection.}.
\begin{lemma}
    If $p(d_i)$ is the discrete uniform distribution supported on a finite set of cardinality $S_i$, $D_{KL}(q(d)||p(d)) \leq S\mathbb{E}_{d, d' \sim q(d)}[\mathds{1}(d=d')]- 1$ where $S=\prod_i S_i$. 
\end{lemma}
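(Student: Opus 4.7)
My plan is to reduce the KL divergence directly to the collision probability of $q$ using the elementary inequality $\log y \le y - 1$, which matches the right-hand side essentially on the nose.

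First I would use independence of the prior's coordinates to observe that $p(d) = \prod_i p(d_i) = 1/S$ is uniform on the product space of cardinality $S$. Then I can rewrite
\[
D_{KL}\bigl(q(d) \,\|\, p(d)\bigr) \;=\; \sum_d q(d)\,\log\!\bigl(S\,q(d)\bigr).
\]

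Next, I would apply $\log y \le y - 1$ (valid for all $y > 0$) pointwise with $y = S\,q(d)$, giving $\log(S\,q(d)) \le S\,q(d) - 1$. Summing against the distribution $q$,
\[
D_{KL}\bigl(q(d)\,\|\,p(d)\bigr) \;\le\; \sum_d q(d)\bigl(S\,q(d) - 1\bigr) \;=\; S \sum_d q(d)^2 \;-\; 1,
\]
where the last equality uses $\sum_d q(d) = 1$. Finally, I recognize the collision sum $\sum_d q(d)^2$ as $\mathbb{E}_{d,d' \sim q(d)}[\mathds{1}(d = d')]$ (with $d, d'$ drawn independently), which gives the claimed upper bound.

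There is no real obstacle here beyond choosing the right inequality: the only subtlety worth flagging explicitly is that the bound relies on $p$ being uniform on the full product space so that $p(d) = 1/S$ is a constant and can be pulled inside the $\log$ cleanly; without uniformity, the step $\log(q(d)/p(d)) \le q(d)/p(d) - 1$ would leave a weighted sum $\sum_d q(d)^2/p(d)$ instead of the collision probability, and the clean form $S\,\mathbb{E}[\mathds{1}(d=d')]-1$ would not appear. This also suggests the bound is tight when $S\,q(d) \equiv 1$, i.e., when $q$ itself is the uniform prior, consistent with $D_{KL}$ vanishing there.
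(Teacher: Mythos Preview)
Your proof is correct and essentially identical to the paper's: both apply the inequality $\log y \le y-1$ to $y=q(d)/p(d)=S\,q(d)$, sum against $q$, and identify $\sum_d q(d)^2$ as the collision probability. The only cosmetic difference is that the paper writes the bound directly as $\log\frac{q(d)}{p(d)} \le \frac{q(d)}{p(d)}-1$ before substituting $p(d)=1/S$, whereas you substitute first.
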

\begin{proof}
    Denote $p(d)=\prod_i p(d_i)$ which is also discrete uniform.
    \small
    \begin{align*}
        &D_{KL}(q(d)||p(d)) = \sum_d q(d) \log \frac{q(d)}{p(d)}\leq \sum_d q(d) \left( \frac{q(d)}{p(d)} - 1\right) \\
        &= S \sum_d q^2(d) - 1 = S \mathbb{E}_{d, d' \sim q(d)}[\mathds{1}(d=d')] - 1
    \end{align*}
\end{proof}
\normalsize
Now our goal is to maximize the following objective
\small
\begin{align}
\label{eqn:objective_original}
&\mathcal{L}(\theta,\phi)=I(x; [z,d]) - \beta \mathbb{E}_{x\sim p(x)} D_\text{KL}(q_\phi(z \mid x) \parallel p(z))\nonumber\\
&\quad\quad\qquad ~~-\lambda D_\text{KL} (q(d) \parallel p(d))\nonumber\\
& \geq H(x)+\int \sum_d q(x,z,d) \log q_{\theta, \phi}(x|z,d) dz dx\nonumber\\
&\quad-\beta \mathbb{E}_{x\sim p(x)} D_\text{KL}(q_\phi(z \mid x) \parallel p(z)) - \lambda D_\text{KL} (q(d) \parallel p(d))\nonumber\\
&\geq H(x)+\mathbb{E}_{x\sim p(x)}[\mathbb{E}_{z,d\sim q_{\phi, \theta}(\cdot | x)}[\log p_{\theta}(x|z,d)]]\nonumber\\
&\quad-\beta \mathbb{E}_{x\sim p(x)} D_\text{KL}(q_\phi(z \mid x) \parallel p(z))\nonumber\\
&\quad- \lambda (S\mathbb{E}_{d, d' \sim q(d)}[\mathds{1}(d=d')]-1)
\end{align}
\normalsize

\subsection{Alternating minimization scheme}
Note the lower bound objective in \Cref{eqn:objective_original} has an inner maximization step over the discrete factors embedded in the variational posterior. Suppose the data is sampled $x^{(i)} \in \mathcal{X}, i=1, \ldots, n$, the continuous latent variables are samples from the decoder $q_\phi(\cdot \mid x^{(i)})$, and the discrete latent variables are represented using one-hot encodings of each variables $d^{(i)} \in \{e_1, \ldots, e_S\}$. After rearranging the terms, we arrive at the following optimization problem.
\small
\begin{align}
\label{eqn:master_eqn}
    &\maximize_{\theta, \phi} \left(\underbrace{\maximize_{d^{(1)},\ldots d^{(n)}} \sum_{i=1}^n {u_\theta^{(i)}}^\intercal d^{(i)} - \lambda' \sum_{i\neq j} {d^{(i)}}^\intercal d^{(j)}}_{:= \mathcal{L}_{LB}(\theta, \phi)} \right)\nonumber\\
    &\qquad\qquad\quad - \beta \sum_{i=1}^n D_{KL}(q_\phi(z|x^{(i)})||p(z))\nonumber\\
    &\text{\ \  subject to }~~ \| d^{(i)} \|_1 = 1,~ d^{(i)} \in \{0,1\}^S,~ \forall i, 
\end{align}
\normalsize
where $u_\theta^{(i)}$ denotes the vector of the  likelihood $\log p_\theta(x^{(i)}|z^{(i)}, e_k)$ evaluated at each $k \in [S]$. Note, the inner maximization problem $\mathcal{L}_{LB}(\theta, \phi)$ in \Cref{eqn:master_eqn} over the discrete variables $[d^{(1)},\ldots,d^{(n)}]$ subject to the sparsity equality constraints can be exactly solved in polynomial time via \emph{minimum cost flow} without  continuous relaxation as shown in Theorem 1 in \cite{JeongS18}.

\section{Related works}\label{sec:rel}
Unsupervised discovery of disentangled factors dates back to 90s. \citet{schumidhuber1992} penalizes the predictability of a latent dimension given the others but the approach did not scale very well. More recently, VAE was proposed and the framework offered scalability and the optimization stability \cite{vae}. Then \citet{betavae} showed that tuning the $\beta$ hyperparameter in VAE to $\beta > 1$ can influence the model to learn statistically independent and disentangled representations by limiting the capacity of the latent information channel. 

The recent follow up works from \citet{factorvae,betatc,anchorvae} then analyzed the KL divergence term under the expectation over the data distribution could be broken down into the mutual information term between the data and the latent code, and the KL divergence term between the latent distribution and the factorial prior often denoted as total correlation (TC) \cite{watanabe1960}. The idea was that regularizing the KL divergence term with high $\beta$ in the VAE framework not only penalizes for TC but also inevitably penalizes the mutual information between the data and the latent variables. 

To this end, the recent works proposed decreasing $\beta$ but more explicitly penalizing for TC. FactorVAE estimates the total correlation via density ratio trick which utilizes a discriminator network \cite{factorvae}. $\beta$-TCVAE employs mini-batch weighted sampling to estimate TC \cite{betatc}. On the contrary, we observe a factorization $TC(z) = \sum_{i=2}^m I(z_{1:i-1}; z_i)$ and show we can penalize TC without any explicit computation by incrementally penalizing each summand in the factorization by cascading the information flow.

On the other hand, NVIL \cite{nvil} and VIMCO \cite{vimco} have explored training VAEs with only discrete latent variables via REINFORCE \cite{williams92} with variance reduction techniques. VQVAE \cite{vqvae} learns discrete latent variables via vector quantization. However,  modeling via binary latent variables alone can be inappropriate since the underlying modalities would be a mix of both continuous and discrete factors for high dimensional complex data.

Jointly modeling the continuous and discrete generative latent factors has been much less explored. InfoGAN \cite{infogan} models both the factors and is based on Generative Adversarial Network (GAN) framework \cite{goodfellow14}. InfoGAN aims at disentangling the factors by maximizing the mutual information between a subset of latent dimensions and the generated samples. However, \cite{factorvae} showed the learning process can be very unstable and significantly reduces the sample diversity in contrast to the $\beta$-VAE framework \cite{betavae}. Empirically, InfoGAN tends to mix discrete and continuous factors which results in lower disentanglement score \cite{factorvae} than $\beta$-VAE based methods. JointVAE proposed jointly modeling both the continuous and discrete factors by augmenting the continuous and discrete latent variables. This introduces an additional KL divergence term for discrete latent variables which is optimized by a continuous reparameterization via Gumbel softmax trick \cite{gumbel1954,maddison16,jang16}. Our method, on the other hand, decouples the task of jointly modeling the continuous and discrete factors of data via alternating maximization and shows significant gains in the disentanglement performance.

\section{Implementation details}
Model architecture and training details are provided in supplementary B. As discussed in \Cref{sec:cascade}, we individually control the $\beta$ term on each continuous variables. Let $\beta_j$ denote the coefficient for a variable $j$. Each $\beta_j$'s start at the high value $\beta_h$ and gets relieved one at a time to the low value $\beta_l$. After each $r$ iterations, we relieve one variable $j$ by switching the coefficient from $\beta_h$ to $\beta_l$. The alternating maximization with discrete variables is enabled after a warm-up time denoted as $t_d$.

For ablation study, we denote our method without the discrete variables as CascadeVAE-C, our method including the discrete variables but without the information cascading as CascadeVAE-D, and the full method as CascadeVAE. \Cref{alg:CascadeVAE-C} shows the pseudocode for CascadeVAE. Note, if $t_d$ is greater than MAXITER, then the pseudocode trains CascadeVAE-C. 

\begin{algorithm}[h!]
   \caption{CascadeVAE}
	\label{alg:CascadeVAE-C}
\begin{algorithmic}
    \STATE \textbf{Input : } Data $\{x^{(i)}\}_{i=1}^N$, Encoder($q_\phi$), Decoder($p_\theta$), $\beta_l, \beta_h$, $r, t_d$, optimizer $g$
    \STATE Initialize parameters $\phi, \theta$.
    \STATE Set $\beta_j=\beta_h,~$ $\forall j$ and $d^{(i)}=0,$ $~~\forall i\in [N]$
    \STATE Set $j=1$.
    \FOR{$t=1,\ldots,$MAXITER}
    \STATE \textbf{if} $t$ is a multiple of $r$
    \STATE $~~~$Switch $\beta_j$ to $\beta_l$ and $j\leftarrow j+1$
    \STATE Randomly select batch $\{x^{(i)}\}_{i \in \mathcal{B}}$
    \STATE Sample $z^{(i)}_\phi \sim q_{\phi}(z|x^{(i)})$ $\forall i \in \mathcal{B}$ 
    \STATE \textbf{if} $t>t_d$
    \STATE $~~~$ Update $u_\theta^{(i)}$ by computing $\log p_\theta(x^{(i)}|z^{(i)}, e_k)$ $\forall k$
    \STATE $~~~$ Compute $\mathcal{L}_{LB}(\theta, \phi)$ by solving for the optimal \\
    \qquad assignment $\{d^{(i)}\}_{i\in \mathcal{B}}$ via minimum cost flow 
    \STATE $\theta, \phi \leftarrow g\left(\nabla_{\theta,\phi} \mathcal{L}_{LB}(\theta,\phi)\right)$
    \ENDFOR
\end{algorithmic}
\end{algorithm}

\section{Experiments}\label{sec:exp}
We perform experiments on dSprites \cite{dsprites}, MNIST, FashionMNIST \cite{fmnist}, and Chairs \cite{chairs} datasets. For quantitative results, dSprites dataset comes labeled with the generative factors which allow quantitative comparisons on the disentanglement metrics. We additionally report unsupervised classification accuracy using the learned discrete variables from running inference on dSprites, MNIST, and FashionMnist.

\subsection{Experiments on dSprites}
DSprites has $737,280$ images of size $64\times 64$ with 5 generative factors: shape (3), scale (6), orientation (40), x-position (32), and y-position (32). We evaluate the performance with the disentanglement score metric proposed by \citet{factorvae}. The details on disentanglement score are provided in Supplementary C. \Cref{tab:dsprites_score} compares the disentanglement scores for various baselines. For AnchorVAE, we experimented anchoring 5 latent dimensions out of total 5 and 20 dimensions. For FactorVAE, we performed hyperparameter search over both $\beta$ and $\gamma$ to reproduce the reported results from the paper. The dimension of discrete latent representation $S$, is fixed to $3$ following the experiment protocol in JointVAE for a fair comparison.

The results for CascadeVAE-C show that cascading the information flow alone in the $\beta$-VAE framework achieves competitive disentanglement scores to the current state of the art FactorVAE method. CascadeVAE-D which does not penalize for TC via information cascading also performs well and boosts the performance of $\beta$-VAE by up to 10 points on the disentanglement metric. 

Our full method without ablations is denoted as CascadeVAE (the last row) in \Cref{tab:dsprites_score}. The method shows approximately 10 points boost on top of the current state of the art FactorVAE method. The experiments suggest that both the information cascading for implicit TC penalization and the discrete modeling via alternating disentanglement have complementary benefits leading to a significant improvement over the baseline methods.

\begin{table}[htbp]
\centering
\fontsize{9pt}{9.5pt}\selectfont
\begin{tabular}{rc rr}
\addlinespace[-\aboverulesep]
\toprule
\multicolumn{1}{c}{Method}&m&\multicolumn{1}{c}{Mean (std)}&\multicolumn{1}{c}{Best}\\
\toprule
$\beta$ VAE\\($\beta=10.0$)& 5 & 70.11 (7.54)&84.62\\
($\beta=4.0$)& 10 &74.41 (7.68)&88.38\\
\midrule
AnchorVAE\\($\beta_l=10.0$)& 5 (5) & 76.36 (4.96)&82.75\\
       ($\beta_l=7.0$)& 5 (20)& 72.44 (6.85)&83.25\\
\midrule
FactorVAE& 5  &81.09 (2.63)&85.12\\
        & 10 &82.15 (0.88)&88.25\\
\midrule
CascadeVAE-C\\($\beta_l=0.7$)& 5  &81.69 (3.14)&88.38\\
    ($\beta_l=1.0$)& 10 &81.74 (2.97)&87.38\\
\midrule
\midrule
JointVAE & 6 &  74.51 (5.17)&91.75\\
        & 4 &  73.06 (2.18)&75.38\\
\midrule
CascadeVAE-D\\($\beta=7.0$) & 6 &79.67 (5.36) & 90.25\\
             ($\beta=1.0$)& 4 &80.70 (4.77) & 96.50\\
\midrule
CascadeVAE\\($\beta_l=1.0$)& 6 &90.49 (5.28)&\textbf{99.50}\\
($\beta_l=2.0$)& 4 &\textbf{91.34 (7.36)}&98.62\\
\bottomrule
\end{tabular}
\caption{DSprites disentanglement score for various baselines. The score is obtained from 10 different random seed each with the best hyperparameters.}
\label{tab:dsprites_score}
\end{table}

\Cref{fig:dsprites_tc_score} shows the scatter plot of $TC(z)$ versus the disentanglement scores for 10 different random seeds. We chose the best hyperparameter settings for each method. The results for CascadeVAE-C show comparable $TC(z)$ with $\beta$-VAE even though the $\beta$ coefficient for our case is much smaller ($\beta_l=1.0$ for CascadeVAE-C and $\beta=4.0$ for $\beta$-VAE). This also shows we can effectively penalize TC implicitly by information cascade without adding extra discriminator networks for explicit penalization while preserving the disentanglement performance. Notably, CascadeVAE shows smallest TC while showing the highest disentanglement performance, again confirming our hypothesis that both TC and joint modeling of discrete and continuous factors are essential in learning disentangled representations.

\begin{figure}
\begin{tikzpicture}
\begin{axis}[ only marks, grid=major, scaled ticks = false, ylabel near ticks, tick pos = left, 
    tick label style={font=\small}, 
    ytick={0.5, 0.6, 0.7, 0.8, 0.9}, 
    xtick={1, 2, 3, 4, 5, 6, 7, 8,9.5,10.5,11.5}, 
    xticklabels={1, 2,3, 4, 5, 6, 7, 8, 18,24,30}, 
    label style={font=\small}, xlabel={$TC(z)$}, ylabel={Disentanglement score}, xmin=0, xmax=12.5, ymin=0.6, ymax=1.0, legend style={legend columns=1, font=\scriptsize}, legend cell align={left}, legend pos=north east]
\addlegendentry{CascadeVAE ($\beta_l=2.0$)}
\addplot+[blue, mark options={fill=blue, scale=0.7, mark=*, solid}] table  [x=tc, y=score, col sep=comma]{csv/tc_score_augvae_discrete.csv};
\addlegendentry{CascadeVAE-C ($\beta_l=1.0$)}
\addplot+[red, mark options={fill=red, scale=0.7, mark=square*, solid}] table  [x=tc, y=score, col sep=comma]{csv/tc_score_augvae.csv};
\addlegendentry{$\beta$ VAE ($\beta=4.0$)}
\addplot+[cyan, mark options={fill=cyan, scale=0.7, mark=diamond*, solid}] table  [x=tc, y=score, col sep=comma]{csv/tc_score_vae.csv};
\addlegendentry{FactorVAE ($\beta=0.4, \gamma=60.0$)}
\addplot+[black, mark options={fill=black, scale=0.7, mark=star, solid}] table  [x=tc, y=score, col sep=comma]{csv/tc_score_factorvae.csv};
\addlegendentry{JointVAE}
\addplot+[green, mark options={fill=green, scale=0.7, mark=triangle*, solid}] table  [x=tc, y=score, col sep=comma]{csv/tc_score_jointvae.csv};
\end{axis}
    \draw (4.8, 0.0) -- node[fill=white,inner sep=-1.25pt,outer sep=0,anchor=center]{\rotatebox[origin=c]{90}{$\approx$}} (4.8, 0.0);
    \draw (4.8, 5.7) -- node[fill=white,inner sep=-1.25pt,outer sep=0,anchor=center]{\rotatebox[origin=c]{90}{$\approx$}} (4.8, 5.7);
\end{tikzpicture}
\caption{$TC(z)$ and disentanglement score on 10 different random seed on dSprites dataset}
\label{fig:dsprites_tc_score}
\end{figure}
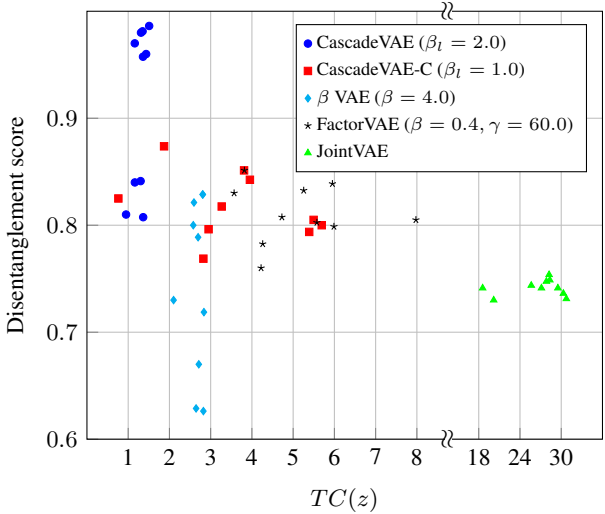

In dSprites dataset, the discrete factors encode the categorical shape information (ellipse, heart, square). Following the experiment protocol in JointVAE, we evaluated the classification accuracy computed from the discrete variables from inference. \Cref{tab:dsprites_cls} compares the results of our method against JointVAE. Note, other baseline methods do not jointly model the continuous and discrete variables. The results show about $30\%$ difference in the classification accuracy. Notably, at the best run, we achieve $99.66\%$ classification accuracy even though the method is unsupervised.

\begin{table}[htbp]
\fontsize{9pt}{9.5pt}\selectfont
\centering
\begin{tabular}{rc ll}
\addlinespace[-\aboverulesep]
\toprule
\multicolumn{1}{c}{Method}&m&\multicolumn{1}{c}{Mean (std)}&\multicolumn{1}{c}{Best}\\
\toprule
JointVAE & 6&44.79 (3.88) &53.14\\
        &  4&43.99 (3.94) &54.11\\
\midrule
CascadeVAE &6&\textbf{78.84 (15.65)}& \textbf{99.66}\\
                &4&76.00 (22.16)& 98.72\\
\bottomrule
\end{tabular}
\caption{Unsupervised classification results on dSprites, $S=3$. Unsupervised classification accuray for random chance is $33.33$.}
\label{tab:dsprites_cls}
\end{table}

For qualitative results, \Cref{fig:dsprites_latent_traversal_beta_factor} first shows the latent traversal results from $\beta$-VAE and FactorVAE. The results show that the methods are capable of capturing the $x,y$-positions, and scale factors but does not disentangle the orientation and shape factors of variation clearly. \Cref{fig:dsprites_latent_traversal_joint_ours} compares the discrete traversal results against JointVAE. Even though we chose the best runs out of 10 random seeds for both methods, JointVAE does not clearly disentangle the discrete shapes. In contrast, CascadeVAE shows almost perfect disentanglement of the discrete shapes where the discrete code $[1 0 0], [0 1 0],$ and $[0 0 1]$ correspond to the ellipse, heart, and square categories respectively. \Cref{fig:dsprites_plot} (left) shows the iteration versus mutual information of each latent dimension $I(x;z_i)$ plot when CascadeVAE is trained with 6 continuous and 1 discrete variables. \Cref{fig:dsprites_plot} (right) shows the traversal results for each variables.

\begin{figure}[htbp]
\begin{tikzpicture}
\node (img) {\includegraphics[width=0.97\linewidth]{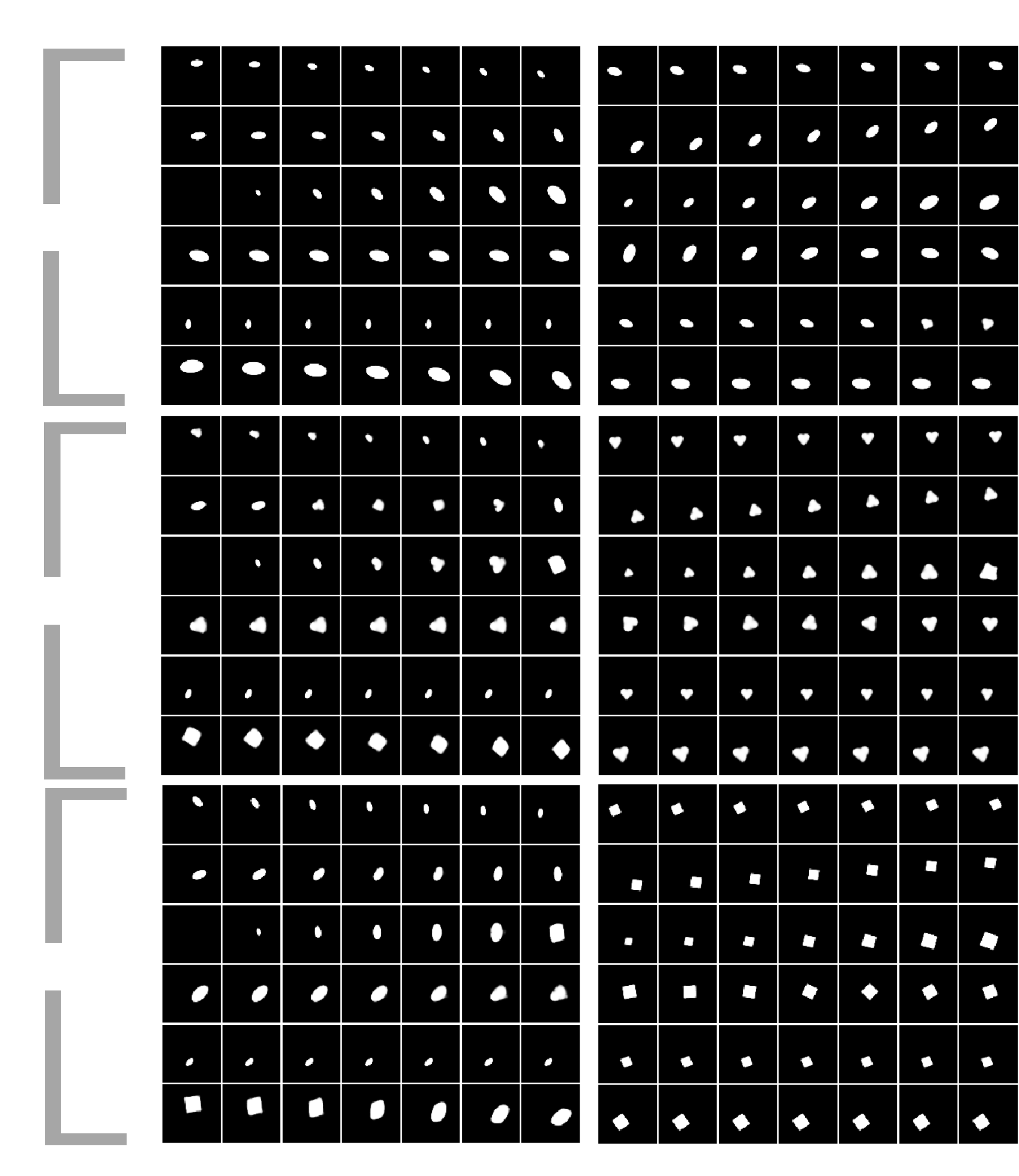}};
\node[text=black, below] at ([xshift=-1.1cm,yshift=0cm]img.north){\textbf{JointVAE}};
\node[text=black, below] at ([xshift=2.3cm,yshift=0cm]img.north){\textbf{CascadeVAE}};
\node[text=black, font=\footnotesize, right] at ([xshift=-2.8cm,yshift=0.25cm]img.south){\scalebox{0.5}{$z_j\!=\!-1.5$}};
\node[text=black, font=\footnotesize, right] at ([xshift=-0.1cm,yshift=0.25cm]img.south){\scalebox{0.5}{$z_j\!=\!1.5$}};
\node[text=black, font=\footnotesize, right] at ([xshift=0.6cm,yshift=0.25cm]img.south){\scalebox{0.5}{$z_j\!=\!-1.5$}};
\node[text=black, font=\footnotesize, right] at ([xshift=3.35cm,yshift=0.25cm]img.south){\scalebox{0.5}{$z_j\!=\!1.5$}};

\node[text=black, font=\footnotesize, right] at ([xshift=-0.1cm,yshift=2.82cm]img.west){\scalebox{0.6}{$\mathbf{d=[1~~0~~0]}$}};
\node[text=black, font=\footnotesize, right] at ([xshift=-0.1cm,yshift=-0.13cm]img.west){\scalebox{0.6}{$\mathbf{d=[0~~1~~0]}$}};
\node[text=black, font=\footnotesize, right] at ([xshift=-0.1cm,yshift=-3.0cm]img.west){\scalebox{0.6}{$\mathbf{d=[0~~0~~1]}$}};

\node[text=black, font=\footnotesize, right] at ([xshift=1.0cm,yshift=4cm]img.west){\scalebox{0.8}{$z_1$}};
\node[text=black, font=\footnotesize, right] at ([xshift=1.0cm,yshift=3.55cm]img.west){\scalebox{0.8}{$z_2$}};
\node[text=black, font=\footnotesize, right] at ([xshift=1.0cm,yshift=3.05cm]img.west){\scalebox{0.8}{$z_3$}};
\node[text=black, font=\footnotesize, right] at ([xshift=1.0cm,yshift=2.6cm]img.west){\scalebox{0.8}{$z_4$}};
\node[text=black, font=\footnotesize, right] at ([xshift=1.0cm,yshift=2.1cm]img.west){\scalebox{0.8}{$z_5$}};
\node[text=black, font=\footnotesize, right] at ([xshift=1.0cm,yshift=1.65cm]img.west){\scalebox{0.8}{$z_6$}};

\node[text=black, font=\footnotesize, right] at ([xshift=1.0cm,yshift=1.1cm]img.west){\scalebox{0.8}{$z_1$}};
\node[text=black, font=\footnotesize, right] at ([xshift=1.0cm,yshift=0.65cm]img.west){\scalebox{0.8}{$z_2$}};
\node[text=black, font=\footnotesize, right] at ([xshift=1.0cm,yshift=0.15cm]img.west){\scalebox{0.8}{$z_3$}};
\node[text=black, font=\footnotesize, right] at ([xshift=1.0cm,yshift=-0.3cm]img.west){\scalebox{0.8}{$z_4$}};
\node[text=black, font=\footnotesize, right] at ([xshift=1.0cm,yshift=-0.8cm]img.west){\scalebox{0.8}{$z_5$}};
\node[text=black, font=\footnotesize, right] at ([xshift=1.0cm,yshift=-1.25cm]img.west){\scalebox{0.8}{$z_6$}};

\node[text=black, font=\footnotesize, right] at ([xshift=1.0cm,yshift=-1.8cm]img.west){\scalebox{0.8}{$z_1$}};
\node[text=black, font=\footnotesize, right] at ([xshift=1.0cm,yshift=-2.3cm]img.west){\scalebox{0.8}{$z_2$}};
\node[text=black, font=\footnotesize, right] at ([xshift=1.0cm,yshift=-2.76cm]img.west){\scalebox{0.8}{$z_3$}};
\node[text=black, font=\footnotesize, right] at ([xshift=1.0cm,yshift=-3.2cm]img.west){\scalebox{0.8}{$z_4$}};
\node[text=black, font=\footnotesize, right] at ([xshift=1.0cm,yshift=-3.7cm]img.west){\scalebox{0.8}{$z_5$}};
\node[text=black, font=\footnotesize, right] at ([xshift=1.0cm,yshift=-4.15cm]img.west){\scalebox{0.8}{$z_6$}};
\end{tikzpicture}
\caption{DSprites latent space traversal results. (Left) Best run of Joint VAE with disentanglement score (91.75). (Right) Best run of CascadeVAE with disentanglement score (99.50)}
\label{fig:dsprites_latent_traversal_joint_ours}
\end{figure}

\begin{figure}[htbp]
\begin{tikzpicture}
\node (img) {\includegraphics[width=0.97\linewidth]{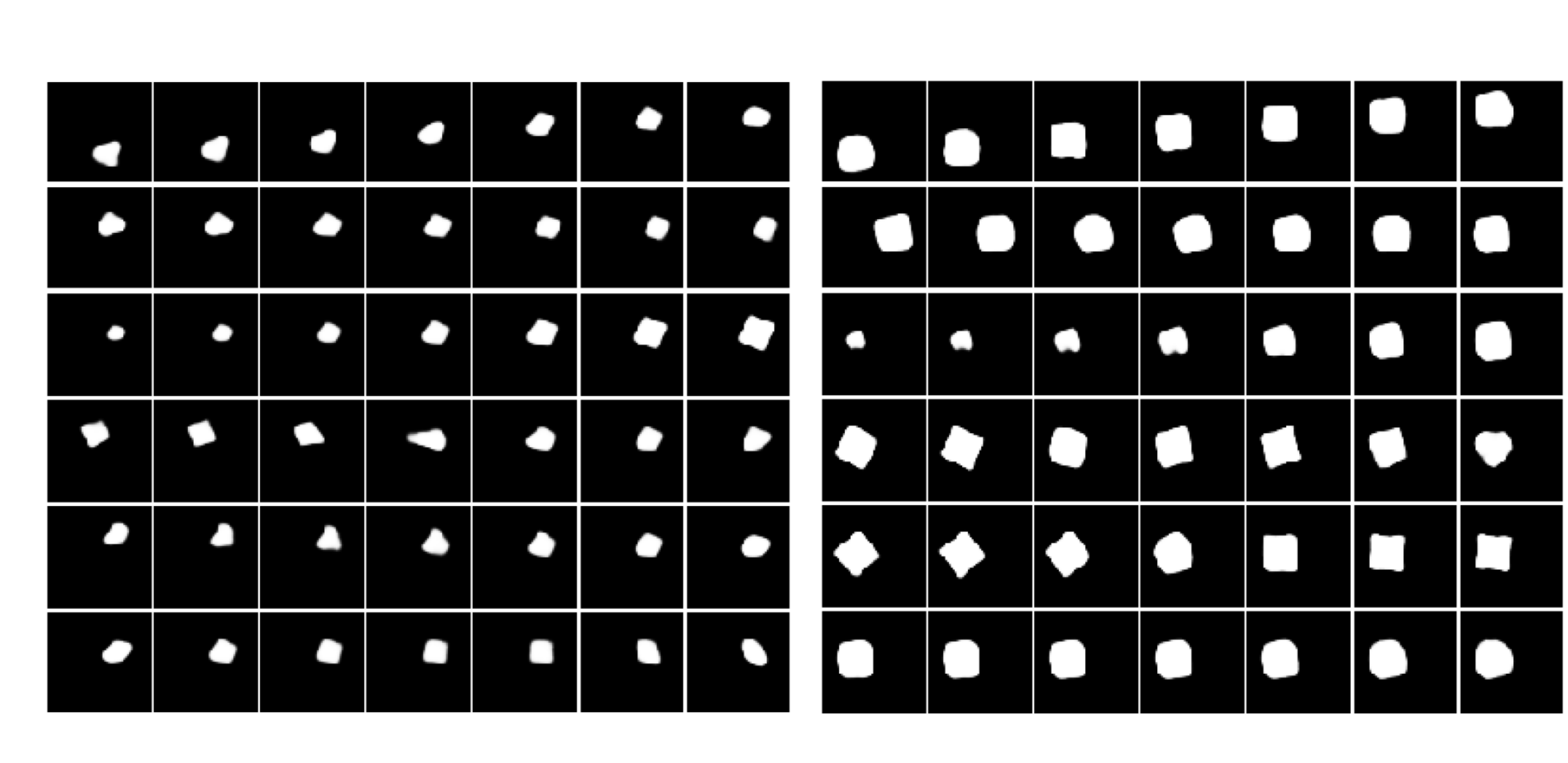}};
\node[text=black, below] at ([xshift=-1.9cm,yshift=0cm]img.north){\textbf{$\beta$-VAE}};
\node[text=black, below] at ([xshift=2.2cm,yshift=0cm]img.north){\textbf{FactorVAE}};
\node[text=black, font=\footnotesize, right] at ([xshift=-3.8cm,yshift=0.25cm]img.south){\scalebox{0.5}{\textbf{$z_j\!=\!-1.5$}}};
\node[text=black, font=\footnotesize, right] at ([xshift=-0.6cm,yshift=0.25cm]img.south){\scalebox{0.5}{\textbf{$z_j\!=\!1.5$}}};
\node[text=black, font=\footnotesize, right] at ([xshift=0.1cm,yshift=0.25cm]img.south){\scalebox{0.5}{\textbf{$z_j\!=\!-1.5$}}};
\node[text=black, font=\footnotesize, right] at ([xshift=3.35cm,yshift=0.25cm]img.south){\scalebox{0.5}{\textbf{$z_j\!=\!1.5$}}};

\node[text=black, font=\footnotesize, right] at ([xshift=-0.1cm,yshift=1.3cm]img.west){\scalebox{0.8}{\textbf{$z_1$}}};
\node[text=black, font=\footnotesize, right] at ([xshift=-0.1cm,yshift=0.73cm]img.west){\scalebox{0.8}{\textbf{$z_2$}}};
\node[text=black, font=\footnotesize, right] at ([xshift=-0.1cm,yshift=0.17cm]img.west){\scalebox{0.8}{\textbf{$z_3$}}};
\node[text=black, font=\footnotesize, right] at ([xshift=-0.1cm,yshift=-0.37cm]img.west){\scalebox{0.8}{\textbf{$z_4$}}};
\node[text=black, font=\footnotesize, right] at ([xshift=-0.1cm,yshift=-0.95cm]img.west){\scalebox{0.8}{\textbf{$z_5$}}};
\node[text=black, font=\footnotesize, right] at ([xshift=-0.1cm,yshift=-1.5cm]img.west){\scalebox{0.8}{\textbf{$z_6$}}};
\end{tikzpicture}
\caption{DSprites latent space traversal results. (Left) $\beta$-VAE, (Right) FactorVAE}
\label{fig:dsprites_latent_traversal_beta_factor}
\end{figure}

\begin{figure}[h]
\begin{subfigure}[t]{0.60\linewidth}
\centering
\begin{adjustbox}{max width=\columnwidth}
\begin{tikzpicture}
\begin{axis}[no markers, width=8.0cm,height=5.8cm, tick label style={font=\small}, ytick={1,2,3,4}, grid=major, label style={font=\footnotesize}, xlabel={iteration}, ylabel={$I(x,z_i)$}, xmin=0.0, xmax=300000, ymin=0.0, ymax=4.7]
\node[text=violet, font=\tiny, left] at (305,130) {Shape};
\node[text=red, font=\tiny, left] at (305,455) {Position X};
\node[text=blue, font=\tiny, left] at (305,410) {Position Y};
\node[text=cyan, font=\tiny, left] at (305,335) {Scale};
\node[text=orange, font=\tiny, left] at (305,295) {Orientation};

\addplot+[violet, solid] table [x=iter, y=value, col sep=comma] {csv/dsprites_mi_d.csv};
\addplot+[red, solid] table [x=iter, y=value, col sep=comma] {csv/dsprites_mi0.csv};
\addplot+[blue, solid] table [x=iter, y=value, col sep=comma] {csv/dsprites_mi1.csv};
\addplot+[cyan, solid] table [x=iter, y=value, col sep=comma] {csv/dsprites_mi2.csv};
\addplot+[orange, solid] table [x=iter, y=value, col sep=comma] {csv/dsprites_mi3.csv};
\addplot+[teal, solid] table [x=iter, y=value, col sep=comma] {csv/dsprites_mi4.csv};
\addplot+[teal, solid] table [x=iter, y=value, col sep=comma] {csv/dsprites_mi5.csv};
\end{axis}
\end{tikzpicture}
\end{adjustbox}
\end{subfigure}
\begin{subfigure}[t]{0.37\linewidth}
\begin{tikzpicture}
\node (img) {\includegraphics[width=\linewidth]{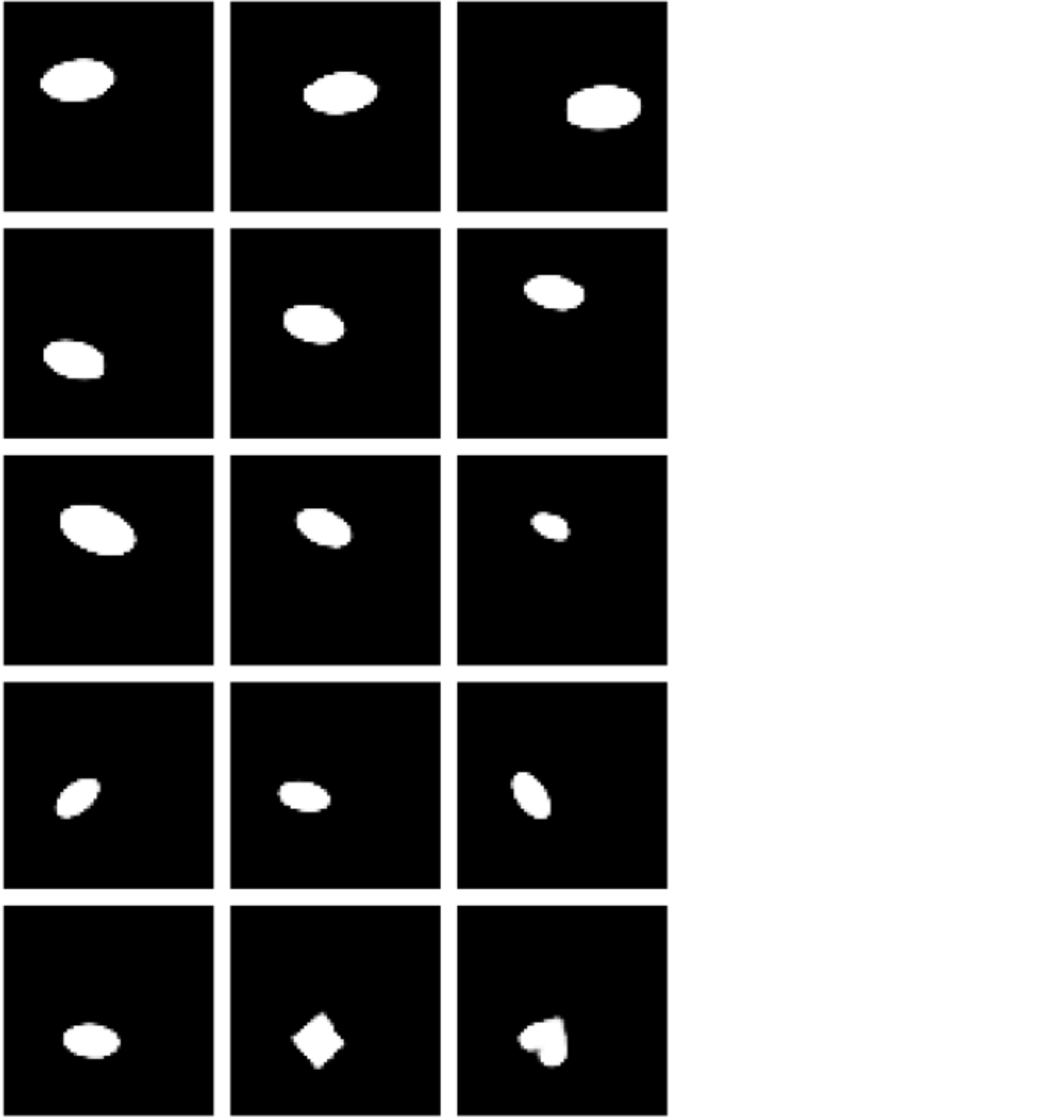}};
\node[text=black, font=\tiny, right] at ([xshift=-1.25cm,yshift=1.3cm]img.east){\textbf{Position X}};
\node[text=black, font=\tiny, right] at ([xshift=-1.25cm,yshift=0.65cm]img.east){\textbf{Position Y}};
\node[text=black, font=\tiny, right] at ([xshift=-1.25cm,yshift=0.0cm]img.east){\textbf{Scale}};
\node[text=black, font=\tiny, right] at ([xshift=-1.25cm,yshift=-0.65cm]img.east){\textbf{Orientation}};
\node[text=black, font=\tiny, right] at ([xshift=-1.25cm,yshift=-1.3cm]img.east){\textbf{Shape}};
\end{tikzpicture}
\end{subfigure}\hspace{0.005\linewidth}
\caption{(Left) Increase of mutual information between image and each dimension $I(x,z_j)$ or $I(x,d)$ during training on Dsprites dataset.
    (Right) Each row respresents latent traversal across each dimension sorted by $I(x,z_j)$. Traversal range is $(-1.2, 1.2)$.}
\label{fig:dsprites_plot}
\end{figure}

\subsection{Experiments on MNIST}
MNIST has $60,000$ images of size $28\times 28$. We split $50,000$ as training images and $10,000$ as test images. MNIST dataset has 10 discrete digit categories. For quantitative comparison, \Cref{tab:mnist_cls} compares the classification accuracy computed from the discrete variables from inference against JointVAE. The results show that CascadeVAE outperforms JointVAE by $15\%$ classification accuracy. We fixed $S=10$ following the experiment protocol in JointVAE for a fair comparison.

For qualitative results, \Cref{fig:mnist_continuous_latent_traversal} and \Cref{fig:mnist_discrete_latent_traversal} show the continuous and discrete latent traversal results, respectively. The results show smooth transitions in angle, width, stroke, and thickness respectively for continuous traversal. The discrete traversal shows that it captures the categorical information of MNIST. \Cref{fig:mnist_plot} (left) shows the iteration versus mutual information of each latent dimension $I(x;z_i)$ plot when CascadeVAE is trained with 10 continuous and 1 discrete variables. \Cref{fig:mnist_plot} (right) shows the traversal results for each variables.

\begin{table}[htbp]
\centering
\fontsize{9pt}{9.5pt}\selectfont
\begin{tabular}{rc rr}
\addlinespace[-\aboverulesep]
\toprule
\multicolumn{1}{c}{Method}&m& \multicolumn{1}{c}{Mean (std)}&\multicolumn{1}{c}{Best}\\
\toprule
JointVAE & 10&68.57 (9.19) &82.30\\
        &  4&78.33 (7.18) &92.81\\
\midrule
CascadeVAE &10&81.41 (9.54)& \textbf{97.31}\\
           & 4&\textbf{84.19 (5.02)}& 96.39\\
\bottomrule
\end{tabular}
\caption{Unsupervised classification results on MNIST, $S=10$.  Unsupervised classification accuray for random chance is $10.00$.}
\label{tab:mnist_cls}
\end{table}

\begin{figure}[bp]
\centering
\begin{subfigure}[t]{0.48\linewidth}
\centering
\includegraphics[width=1.0\linewidth]{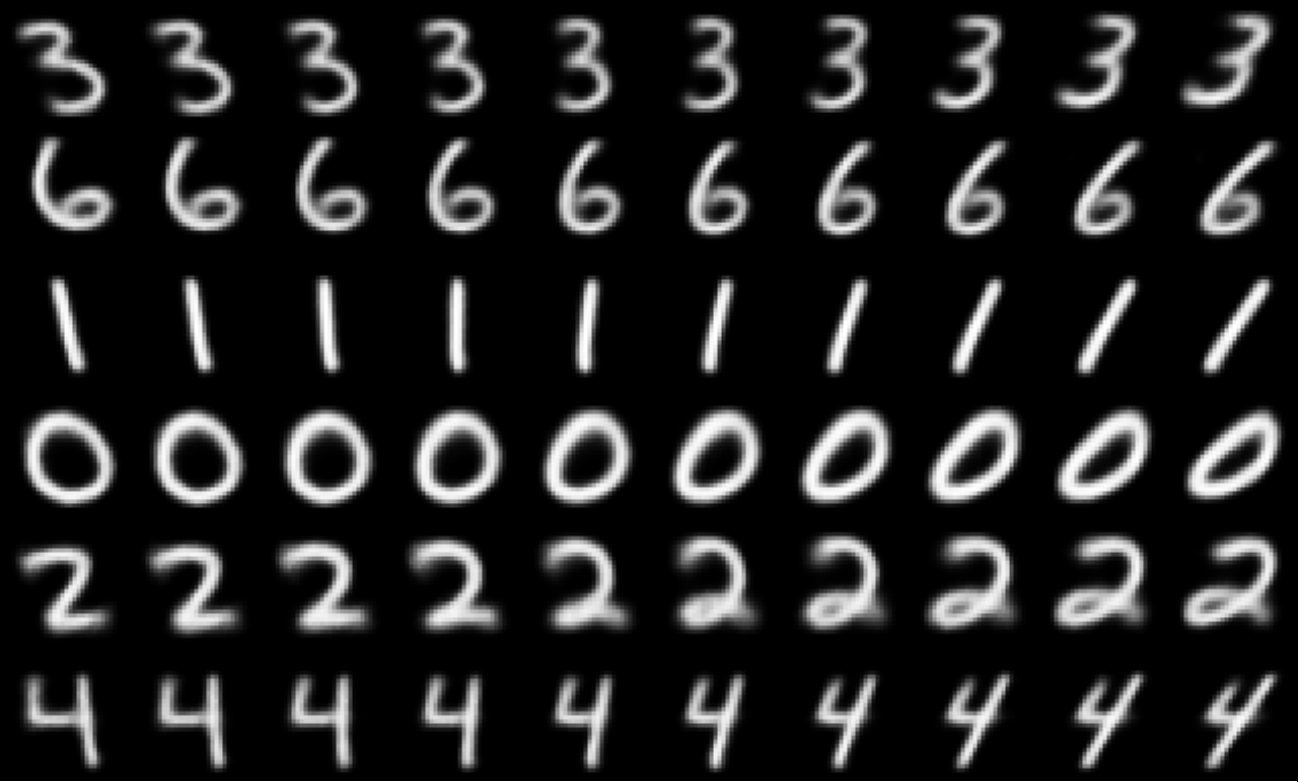}
\caption{Angle}
\end{subfigure}\hspace{0.005\linewidth}
\begin{subfigure}[t]{0.48\linewidth}
\centering
\includegraphics[width=1.0\linewidth]{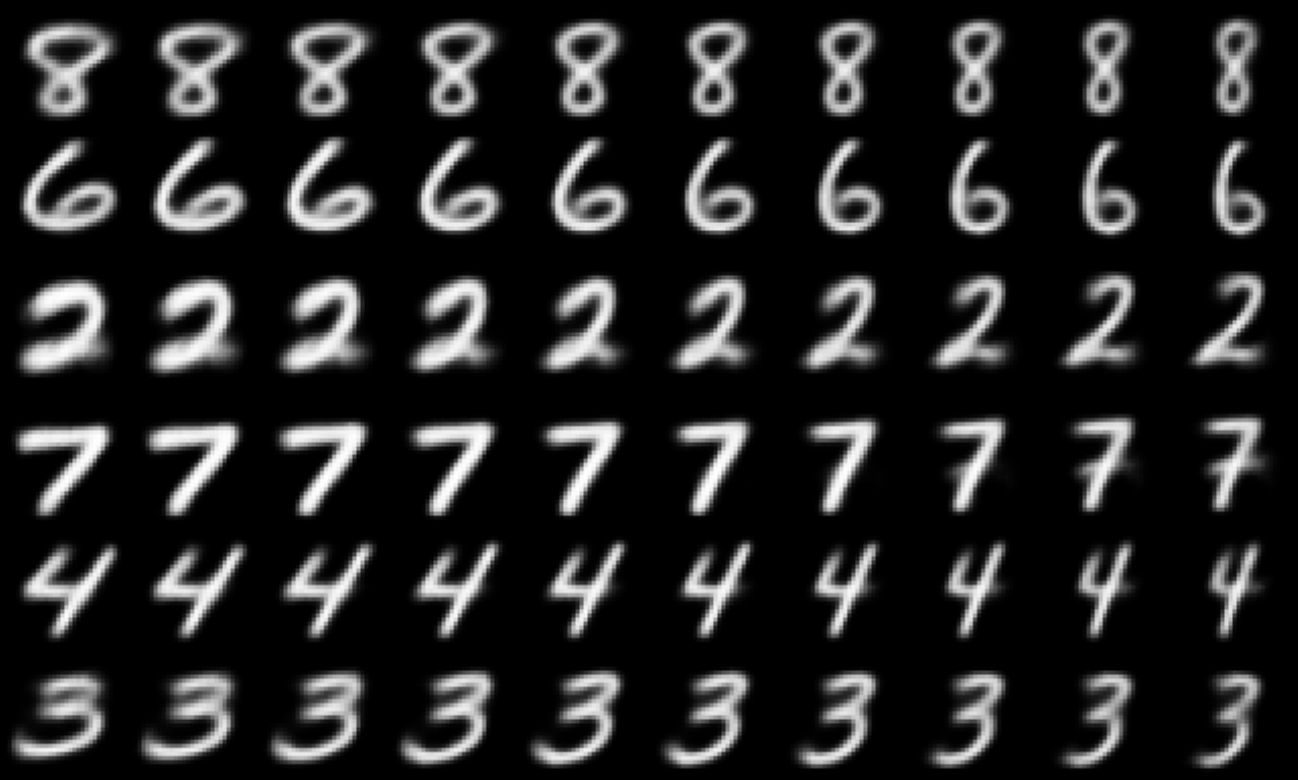}
\caption{Width}
\end{subfigure}\hspace{0.005\linewidth}
\begin{subfigure}[t]{0.48\linewidth}
\centering
\includegraphics[width=1.0\linewidth]{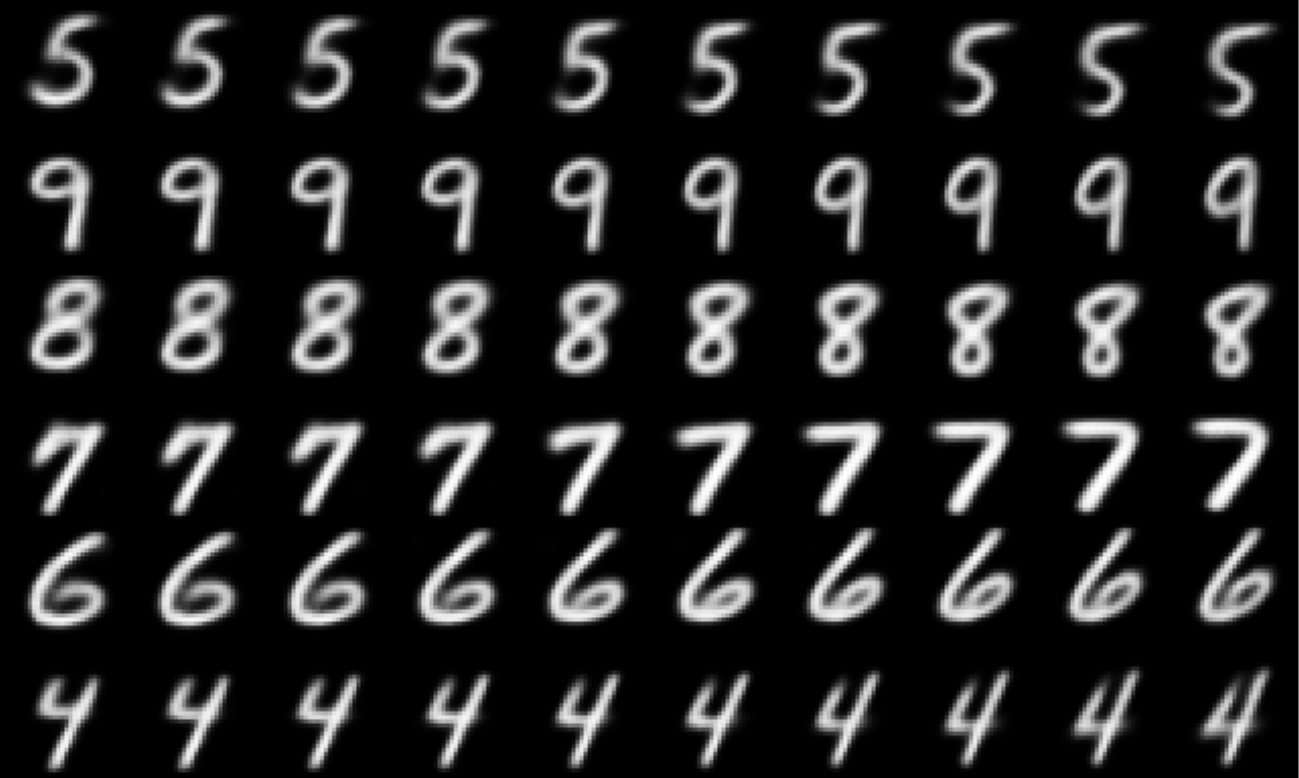}
\caption{Stroke}
\end{subfigure}\hspace{0.005\linewidth}
\begin{subfigure}[t]{0.48\linewidth}
\centering
\includegraphics[width=1.0\linewidth]{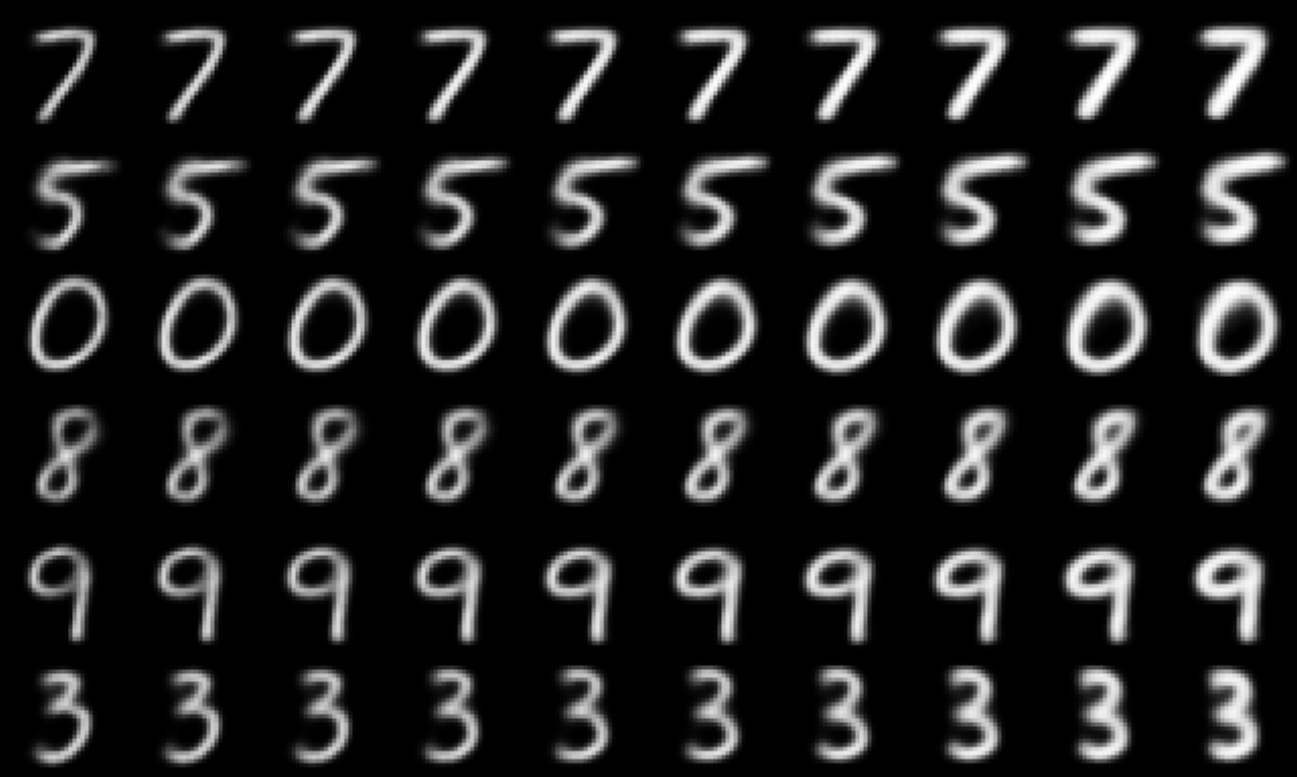}
\caption{Thickness}
\end{subfigure}
\caption{
    Latent traversals on MNIST. Images in a row has the same latent variables except the traversed variable.}
\label{fig:mnist_continuous_latent_traversal}
\end{figure}

\begin{figure}[htbp]
\includegraphics[width=1.0\linewidth]{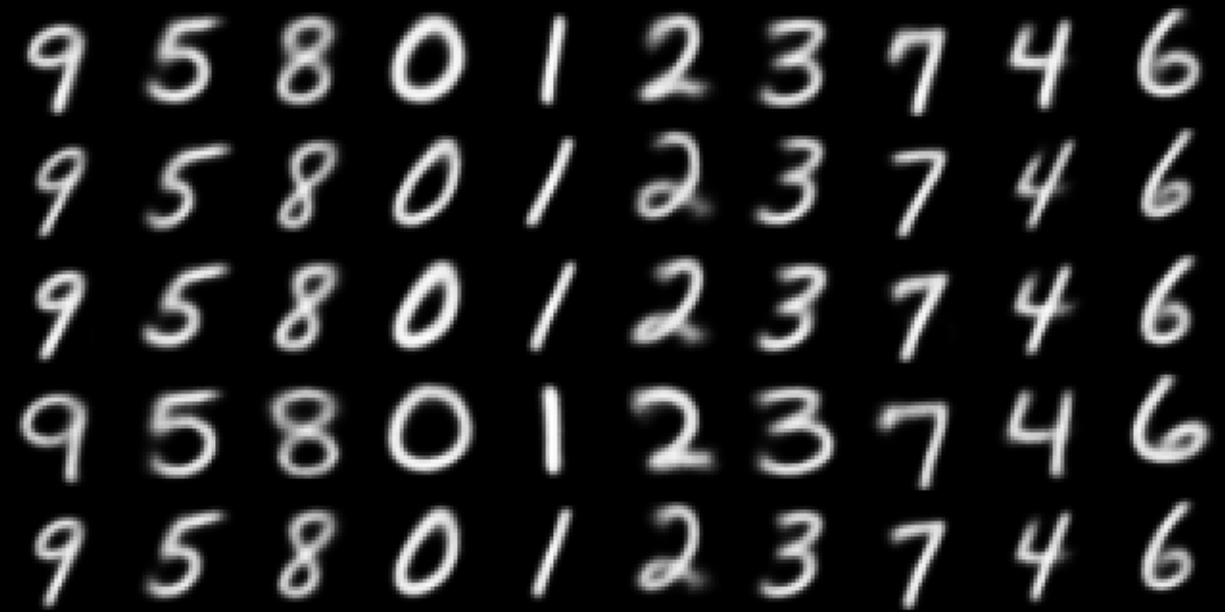}
\caption{MNIST discrete latent space traversal}
\label{fig:mnist_discrete_latent_traversal}
\end{figure}

\begin{figure}[htbp]
\centering
\begin{subfigure}[t]{0.48\linewidth}
\centering
\includegraphics[width=1.0\linewidth]{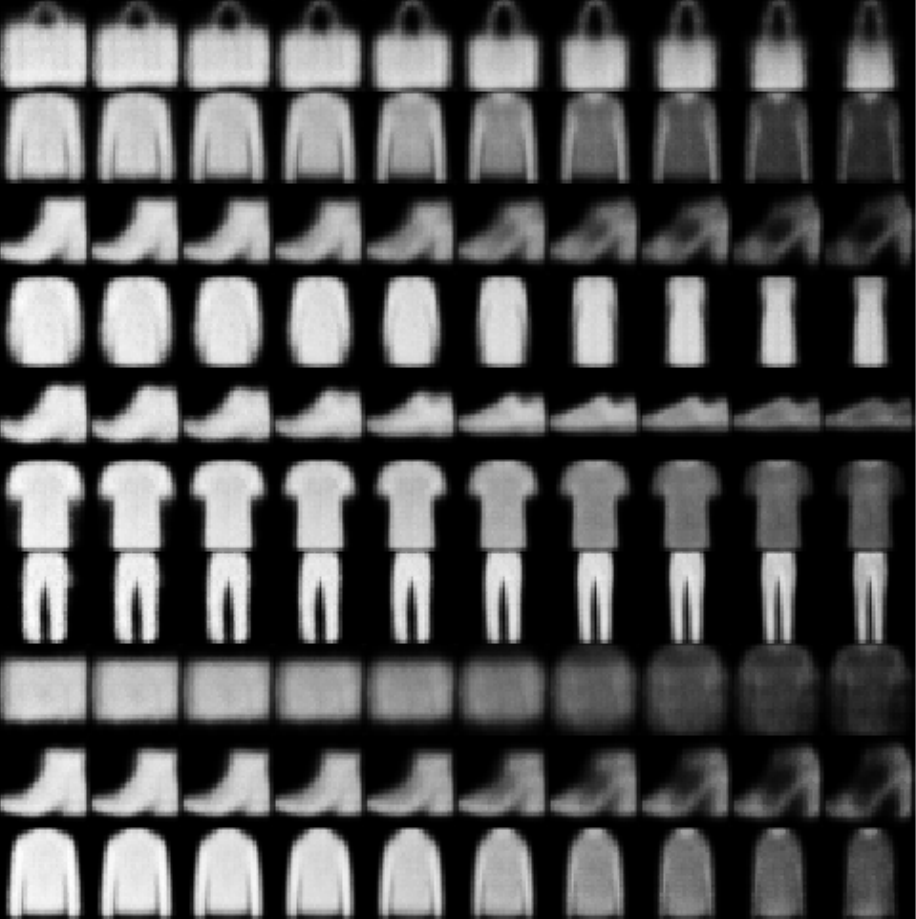}
\caption{Continous}
\end{subfigure}\hspace{0.005\linewidth}
\begin{subfigure}[t]{0.48\linewidth}
\centering
\includegraphics[width=1.0\linewidth]{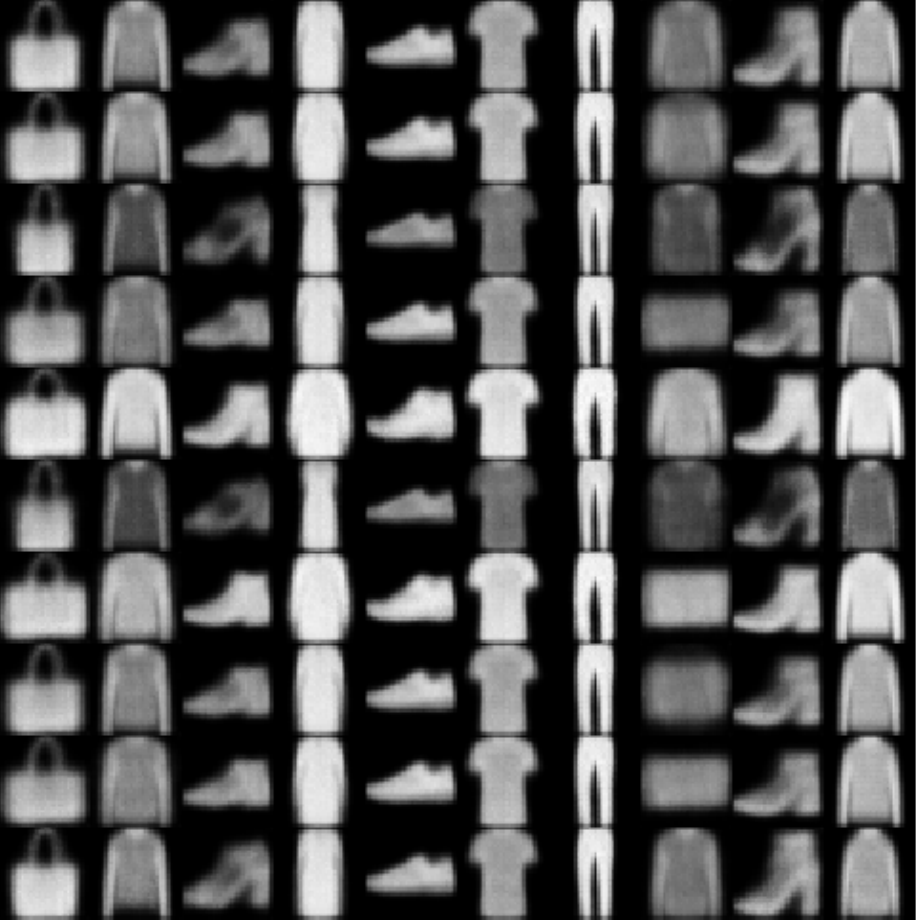}
\caption{Discrete}
\end{subfigure}\hspace{0.005\linewidth}
\caption{Latent traversals of the most informative continuous dimension (left) and the discrete dimension (right) on FashionMNIST. Traversal range in continuous dimension is (-2.0, 2.0).}
\label{fig:fmnist_latent_traversal}
\end{figure}

\begin{figure}[htbp]
\begin{subfigure}[t]{0.59\linewidth}
\centering
\begin{adjustbox}{max width=\columnwidth}
\begin{tikzpicture}
\begin{axis}[no markers, tick label style={font=\small}, grid=major, label style={font=\small}, xlabel={iteration}, ylabel={$I(x,z_i)$}, xmin=0.0, xmax=120000, ymin=0.0, ymax=2.5]
\node[text=red, font=\footnotesize, left] at (120,240) {Digit};
\node[text=brown, font=\footnotesize, left] at (120,215) {Slant};
\node[text=violet, font=\footnotesize, left] at (120,187) {Width};
\node[text=orange, font=\footnotesize, left] at (120,157) {Stroke1};
\node[text=green, font=\footnotesize, left] at (120,133) {Thickness1};
\node[text=teal, font=\footnotesize, left] at (120,108) {Stroke2};
\node[text=cyan, font=\footnotesize, left] at (120,83) {Thickness2};
\node[text=blue, font=\footnotesize, left] at (120,55) {Stroke3};

\addplot+[red, solid] table [x=iter, y=value, col sep=comma] {csv/mnist_mi_d.csv};
\addplot+[brown, solid] table [x=iter, y=value, col sep=comma] {csv/mnist_mi0.csv};
\addplot+[violet, solid] table [x=iter, y=value, col sep=comma] {csv/mnist_mi1.csv};
\addplot+[orange, solid] table [x=iter, y=value, col sep=comma] {csv/mnist_mi2.csv};
\addplot+[green, solid] table [x=iter, y=value, col sep=comma] {csv/mnist_mi3.csv};
\addplot+[teal, solid] table [x=iter, y=value, col sep=comma] {csv/mnist_mi4.csv};
\addplot+[cyan, solid] table [x=iter, y=value, col sep=comma] {csv/mnist_mi5.csv};
\addplot+[blue, solid] table [x=iter, y=value, col sep=comma] {csv/mnist_mi6.csv};
\addplot+[magenta, solid] table [x=iter, y=value, col sep=comma] {csv/mnist_mi7.csv};
\addplot+[black, solid] table [x=iter, y=value, col sep=comma] {csv/mnist_mi8.csv};
\addplot+[gray, solid] table [x=iter, y=value, col sep=comma] {csv/mnist_mi9.csv};
\end{axis}
\end{tikzpicture}
\end{adjustbox}
\end{subfigure}
\begin{subfigure}[t]{0.37\linewidth}
\begin{tikzpicture}
\node (img) {\includegraphics[width=\linewidth]{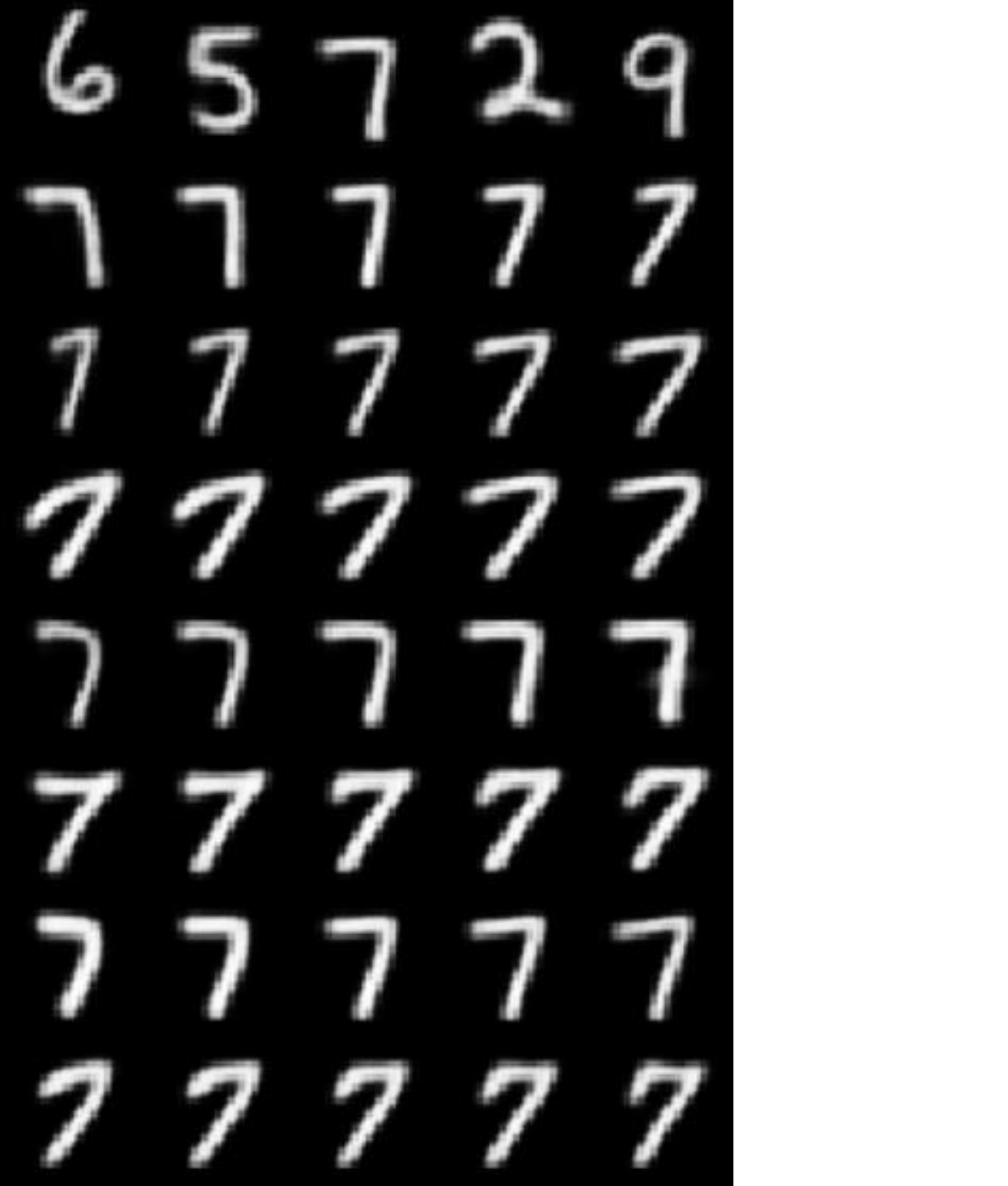}};
\node[text=black, font=\tiny, right] at ([xshift=-1cm,yshift=1.55cm]img.east){\scalebox{0.8}{\textbf{Digit}}};
\node[text=black, font=\tiny, right] at ([xshift=-1cm,yshift=1.1cm]img.east){\scalebox{0.8}{\textbf{Slant}}};
\node[text=black, font=\tiny, right] at ([xshift=-1cm,yshift=0.6cm]img.east){\scalebox{0.8}{\textbf{Width}}};
\node[text=black, font=\tiny, right] at ([xshift=-1cm,yshift=0.2cm]img.east){\scalebox{0.8}{\textbf{Stroke1}}};
\node[text=black, font=\tiny, right] at ([xshift=-1cm,yshift=-0.25cm]img.east){\scalebox{0.8}{\textbf{Thickness1}}};
\node[text=black, font=\tiny, right] at ([xshift=-1cm,yshift=-0.7cm]img.east){\scalebox{0.8}{\textbf{Stroke2}}};
\node[text=black, font=\tiny, right] at ([xshift=-1cm,yshift=-1.15cm]img.east){\scalebox{0.8}{\textbf{Thickness2}}};
\node[text=black, font=\tiny, right] at ([xshift=-1cm,yshift=-1.6cm]img.east){\scalebox{0.8}{\textbf{Stroke3}}};
\end{tikzpicture}
\end{subfigure}\hspace{0.005\linewidth}
\caption{(Left) Increase of mutual information between image and each continuous $I(x,z_j)$ and discrete factors $I(x,d)$ during training on MNIST dataset.
    (Right) Each row respresents latent traversal across each dimension sorted by $I(x,z_j)$. Traversal range is $(-2.0, 2.0)$.}
\label{fig:mnist_plot}
\end{figure}

\begin{figure}[htbp]
\begin{tikzpicture}
\node (img) {\includegraphics[width=0.97\linewidth]{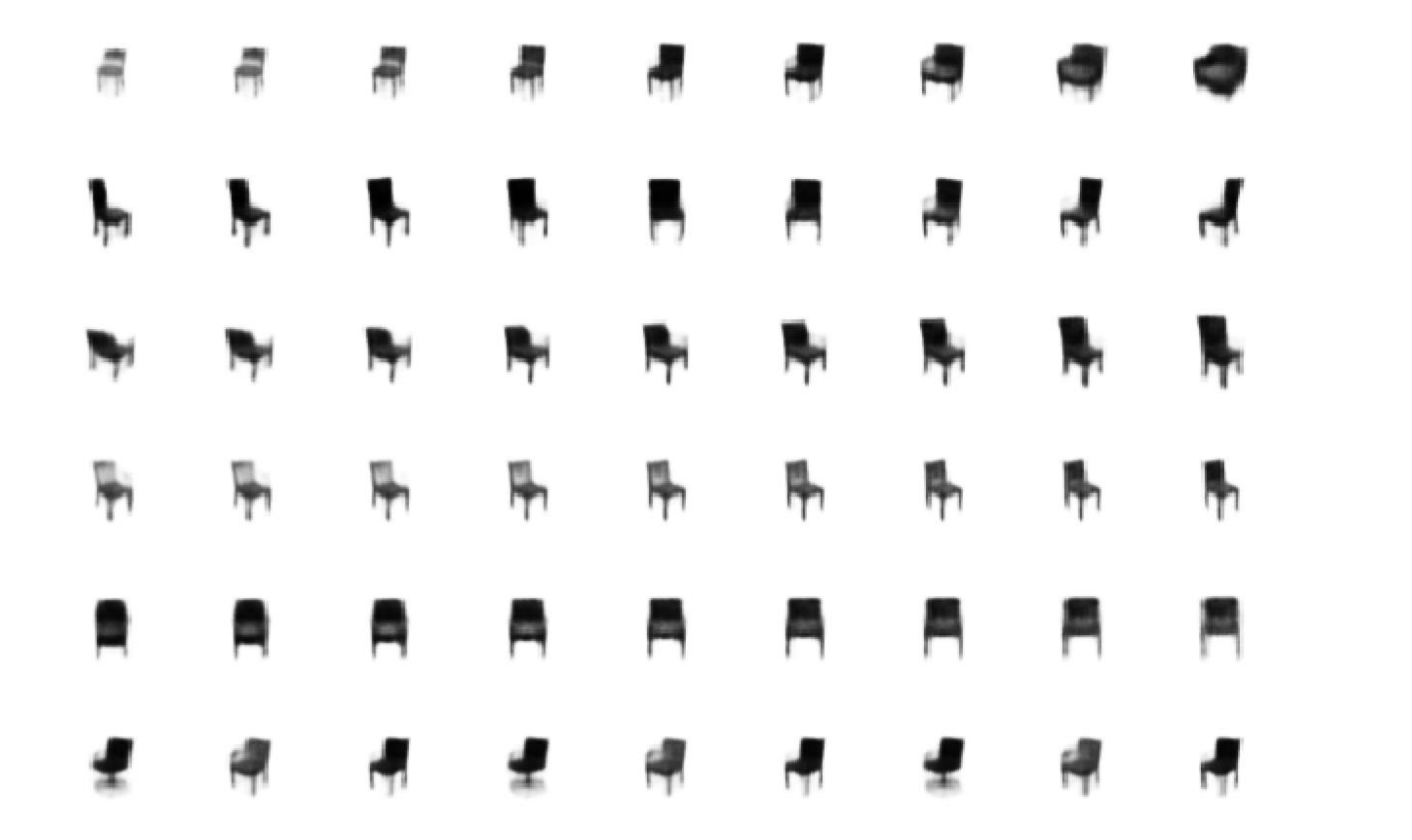}};
\node[text=black, font=\footnotesize, right] at ([xshift=0.2cm,yshift=2.0cm]img.west){\scalebox{0.75}{\textbf{$z_1$}}};
\node[text=black, font=\footnotesize, right] at ([xshift=0.2cm,yshift=1.2cm]img.west){\scalebox{0.75}{\textbf{$z_2$}}};
\node[text=black, font=\footnotesize, right] at ([xshift=0.2cm,yshift=0.4cm]img.west){\scalebox{0.75}{\textbf{$z_3$}}};
\node[text=black, font=\footnotesize, right] at ([xshift=0.2cm,yshift=-0.35cm]img.west){\scalebox{0.75}{\textbf{$z_4$}}};
\node[text=black, font=\footnotesize, right] at ([xshift=0.2cm,yshift=-1.15cm]img.west){\scalebox{0.75}{\textbf{$z_5$}}};
\node[text=black, font=\footnotesize, right] at ([xshift=0.2cm,yshift=-1.9cm]img.west){\scalebox{0.75}{\textbf{$d$}}};
\node[text=black, font=\footnotesize, right] at ([xshift=-1cm,yshift=2.0cm]img.east){\scalebox{0.55}{\textbf{Size}}};
\node[text=black, font=\footnotesize, right] at ([xshift=-1cm,yshift=1.2cm]img.east){\scalebox{0.55}{\textbf{Azimuth}}};
\node[text=black, font=\footnotesize, right] at ([xshift=-1cm,yshift=0.4cm]img.east){\scalebox{0.55}{\textbf{Backrest}}};
\node[text=black, font=\footnotesize, right] at ([xshift=-1cm,yshift=-0.35cm]img.east){\scalebox{0.55}{\textbf{Material}}};
\node[text=black, font=\footnotesize, right] at ([xshift=-1cm,yshift=-1.15cm]img.east){\scalebox{0.55}{\textbf{Leg length}}};
\node[text=black, font=\footnotesize, right] at ([xshift=-1cm,yshift=-1.9cm]img.east){\scalebox{0.55}{\textbf{Leg and arm}}};

\end{tikzpicture}
\caption{Latent space traversal on chairs dataset. The last row shows the latent traversal of the discrete factor of dimension 3 with the period of 3.}
\label{fig:chairs_latent_traversal}
\end{figure}

\begin{figure}[htbp]
\begin{adjustbox}{max width=\columnwidth}
\begin{tikzpicture}
\begin{axis}[no markers, width=8.0cm,height=5.8cm, tick label style={font=\small}, grid=major, label style={font=\footnotesize}, xlabel={Iteration}, ylabel={$I(x,z_i)$}, xmin=0.0, xmax=150000, ymin=0.0, ymax=4.0]
\node[text=violet, font=\tiny, left] at (150,120) {Leg and arm};
\node[text=red, font=\tiny, left] at (150,340) {Size};
\node[text=blue, font=\tiny, left] at (150,380) {Azimuth};
\node[text=cyan, font=\tiny, left] at (150,290) {Backrest};
\node[text=orange, font=\tiny, left] at (150, 210) {Material};
\node[text=teal, font=\tiny, left] at (150,237) {Leg length};

\addplot+[violet, solid] table [x=iter, y=value, col sep=comma] {csv/chairs_mi_d.csv};
\addplot+[red, solid] table [x=iter, y=value, col sep=comma] {csv/chairs_mi0.csv};
\addplot+[blue, solid] table [x=iter, y=value, col sep=comma] {csv/chairs_mi1.csv};
\addplot+[cyan, solid] table [x=iter, y=value, col sep=comma] {csv/chairs_mi2.csv};
\addplot+[orange, solid] table [x=iter, y=value, col sep=comma] {csv/chairs_mi3.csv};
\addplot+[teal, solid] table [x=iter, y=value, col sep=comma] {csv/chairs_mi4.csv};
\end{axis}
\end{tikzpicture}
\end{adjustbox}
\caption{Increase of $I(x,z_i)$ during training the chairs dataset}
\label{fig:chairs_plot}
\end{figure}
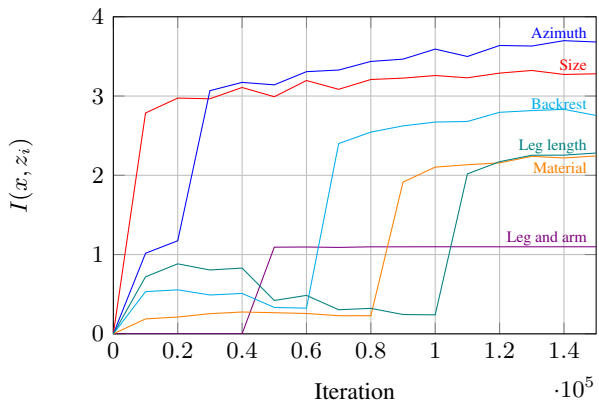

\begin{table}[htbp]
\fontsize{9pt}{9.5pt}\selectfont
\centering
\begin{tabular}{rc ll}
\addlinespace[-\aboverulesep]
\toprule
\multicolumn{1}{c}{Method}&m (conti)&\multicolumn{1}{c}{Mean (std)}&\multicolumn{1}{c}{Best}\\
\toprule
JointVAE           &10&50.99 (2.5) &55.64\\
                  & 4&51.51 (4.42)&61.79\\
\midrule
CascadeVAE &10&56.01 (3.33)&63.01\\
           & 4&\textbf{57.72 (3.29)}&\textbf{63.55}\\
\bottomrule
\end{tabular}
\caption{Unsupervised classification results on FashionMNIST, $S=10$. Unsupervised classification accuray for random chance is $10.00$.}
\label{tab:fmnist_cls}
\end{table}

\subsection{Experiments on FashionMNIST}
FashionMNIST dataset has $60,000$ images of size $28\times 28$ with 10 categorial labels like MNIST. \Cref{tab:fmnist_cls} shows the unsupervised classification results from inference in comparison to JointVAE. The results show CascadeVAE consistently outperforms JointVAE. \Cref{fig:fmnist_latent_traversal} shows the latent traversal of the most informative continuous dimension and discrete dimension. 

\subsection{Chairs}
We preprocessed the chairs dataset \cite{chairs} and prepared $86,366$ images of size $64\times64$. Since the chairs dataset is unlabeled in contrast to MNIST, we can only evaluate the qualitative performance instead. \Cref{fig:chairs_latent_traversal} shows the latent traversal results on chairs. The traversal on discrete latent variables (the last row in the figure) shows that it captures to the categorial shape of different chairs. \Cref{fig:chairs_plot} shows the iteration versus the estimated mutual information $I(x,z_i)$ plot for each variables.

\section{Conclusion}\label{sec:conclusion}
We have developed CascadeVAE for jointly learning the discrete and continuous factors of data in a $\beta$-VAE framework. We first propose an efficient procedure for implicitly penalizing the total correlation by controlling the information flow on each variable. This allows us to penalize the total correlation without using extra discriminator networks or sampling procedures. Then, we show a method for jointly learning discrete and continuous latent variables in an alternating maximization framework where we alternate between finding the most likely discrete configurations based on the continuous latent variables, and updating the inference parameters based on the discrete variables. 

Our ablation study shows that information cascading and alternating maximization of discrete and continuous variables, provide complementary benefits and leads to the state of the art performance in 1) disentanglement score, and 2) classification accuracy score from the discrete inference network, compared to a number of recently proposed methods.

\section*{Acknowledgements}
This work was partially supported by Kakao, Kakao Brain and Basic Science Research Program through the National Research Foundation of Korea (NRF) (2017R1E1A1A01077431). Hyun Oh Song is the corresponding author.

\newpage
\bibliography{main}
\bibliographystyle{icml2019}

\newpage
\section*{Supplementary material}
\section*{A. Proofs}\label{sec:proofs}
\addtocounter{proposition}{0}
\setcounter{proposition}{0}

\subsection*{A1. Proof of Proposition 1}

\begin{proposition}
\label{prop1}
The mutual information between one dimension of a random variable and the rest can be factorized as
\[I(z_{1:i-1}; z_i) = TC(z_{1:i}) - TC(z_{1:i-1})\]
\end{proposition}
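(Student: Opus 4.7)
The plan is to reduce both sides to expressions involving only (joint) Shannon entropies and then match them using the chain rule. I would first expand the total correlation using the standard identity
\[
TC(z_{1:k}) \;=\; D_{\mathrm{KL}}\!\left(q(z_{1:k}) \,\Big\|\, \prod_{j=1}^{k} q(z_j)\right) \;=\; \sum_{j=1}^{k} H(z_j) \;-\; H(z_{1:k}),
\]
which is immediate from writing out the KL divergence and noting that $\mathbb{E}_{q(z_{1:k})}[\log q(z_j)] = -H(z_j)$ by marginalization.

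Next I would substitute this identity into $TC(z_{1:i}) - TC(z_{1:i-1})$. The single-variable entropies $H(z_1),\ldots,H(z_{i-1})$ cancel pairwise, leaving
\[
TC(z_{1:i}) - TC(z_{1:i-1}) \;=\; H(z_i) \;-\; H(z_{1:i}) \;+\; H(z_{1:i-1}).
\]
Finally, by the chain rule of entropy, $H(z_{1:i}) - H(z_{1:i-1}) = H(z_i \mid z_{1:i-1})$, so the right-hand side becomes $H(z_i) - H(z_i \mid z_{1:i-1}) = I(z_{1:i-1}; z_i)$, which is exactly the claimed identity.

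There is no real obstacle here; the argument is purely bookkeeping with entropies. The only point worth being careful about is making sure the TC-as-KL definition is correctly converted into the entropy form (which is valid whenever the marginal entropies are finite, as is standard in this setting), and keeping the telescoping of the $\sum_j H(z_j)$ terms clean when differencing $TC(z_{1:i})$ and $TC(z_{1:i-1})$. Everything else is a one-line application of the chain rule.
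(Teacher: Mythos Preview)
Your proof is correct. It differs from the paper's only in presentation: the paper works directly with the integral form of the KL divergence, writes out $TC(z_{1:i})$ and $TC(z_{1:i-1})$ as integrals, absorbs the second into the first by integrating over $z_i$, and then recognizes the resulting log-ratio $\log\frac{p(z_{1:i})}{p(z_{1:i-1})p(z_i)}$ as the definition of $I(z_{1:i-1};z_i)$. You instead pass through the entropy identity $TC(z_{1:k})=\sum_j H(z_j)-H(z_{1:k})$ and finish with the chain rule $H(z_{1:i})-H(z_{1:i-1})=H(z_i\mid z_{1:i-1})$. Both routes are one-line bookkeeping; yours is arguably cleaner and makes the telescoping structure more transparent, while the paper's version avoids assuming the reader knows the entropy form of total correlation.
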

\begin{proof}
First recall the definition of total correlation, 
\[TC(z_{1:i}) = D_\text{KL}\left(p(z_{1:i}) \parallel \prod_{j=1}^i p(z_j)\right)\]
Then, we have
\begin{align*}
    TC(&z_{1:i}) - TC(z_{1:i-1})\\
    &= \int p(z_{1:i}) \log \frac{p(z_{1:i})}{\prod_{j=1}^i p(z_j)} dz_{1:i} \\
    &~~~~- \int p(z_{1:i-1}) \log \frac{p(z_{1:i-1})}{\prod_{j=1}^{i-1} p(z_{j})} dz_{1:i-1} \\
    &= \int p(z_{1:i}) \log \frac{p(z_{1:i})}{\prod_{j=1}^i p(z_j)} dz_{1:i} \\
    &~~~~- \int p(z_{1:i}) \log \frac{p(z_{1:i-1})}{\prod_{j=1}^{i-1} p(z_j)} dz_{1:i} \\
    &= \int p(z_{1:i}) \log \frac{p(z_{1:i})}{p(z_{1:i-1})p(z_i)} dz_{1:i} \\
    &= I(z_{1:i-1};z_i)
\end{align*}
\end{proof}

\subsection*{A2. Proof of Proposition 2}

\begin{proposition}
\label{prop2}
The mutual information between $x$ and partitions of $z = [z_1, z_2]$ can be factorized as,
\[I(x; [z_1, z_2]) = I(x;z_1) + I(x; z_2) - I(z_1; z_2)\]
\end{proposition}
\begin{proof}

Recall the conditional independence of the latent variables $p(z_1,z_2|x)=p(z_1|x)p(z_2|x)$,
\begin{align*}
    &I(x;[z_1,z_2])\\
    &= \int p(x,z_1,z_2) \log \frac{p(x,z_1,z_2)}{p(x)p(z_1,z_2)} dz_1dz_2 dx  \\
    &= \int p(x,z_1,z_2) \log \bigg(\frac{p(x,z_1,z_2)}{p(x)p(z_1,z_2)}\cdot \frac{p(x)p(z_1)}{p(x,z_1)} \\
    &\hspace{9.5em}\cdot\frac{p(x)p(z_2)}{p(x,z_2)}\cdot \frac{p(z_1,z_2)}{p(z_1)p(z_2)}\bigg) dz_1dz_2 dx  \\
    &~~~+ \int p(x,z_1,z_2) \log \frac{p(x,z_1)}{p(x)p(z_1)} dx dz_1 dz_2\\
    &~~~+ \int p(x,z_1,z_2) \log \frac{p(x,z_2)}{p(x)p(z_2)} dx dz_1 dz_2\\
    &~~~- \int p(x,z_1,z_2) \log \frac{p(z_1,z_2)}{p(z_1)p(z_2)} dx dz_1 dz_2\\
    &= \int p(x,z_1,z_2) \log \frac{p(x,z_1,z_2)}{p(x)}\cdot \frac{p(x)}{p(x,z_1)}\cdot \frac{p(x)}{p(x,z_2)} dz_1dz_2 dx  \\
    &~~~+ \int p(x,z_1) \log \frac{p(x,z_1)}{p(x)p(z_1)} dx dz_1 \\
    &~~~+ \int p(x,z_2) \log \frac{p(x,z_2)}{p(x)p(z_2)} dx dz_2\\
    &~~~- \int p(z_1,z_2) \log \frac{p(z_1,z_2)}{p(z_1)p(z_2)} dz_1 dz_2\\
    &= \int p(x)p(z_1,z_2|x) \log \frac{p(z_1,z_2|x)}{p(z_1|x) p(z_2|x)} dz_1dz_2 dx\\
    &~~~+ I(x;z_1)+I(x;z_2)-I(z_1;z_2)\\
    &= \mathbb{E}_{x\sim p(x)}[\int p(z_1,z_2|x) \log \frac{p(z_1,z_2|x)}{p(z_1|x) p(z_2|x)} dz_1dz_2]\\
    &~~~+ I(x;z_1)+I(x;z_2)-I(z_1;z_2)\\
    &= I(x;z_1)+I(x;z_2)-I(z_1;z_2)
\end{align*}
\end{proof}

\section*{B. Implementation details}

We follow the Network architecture in \cite{jointvae}. We use $[0,1]$ normalized image data.
\Cref{tab:arch_64} is the model architecture for $64\times 64$ images (Chairs and dSprites).
MNIST and FashionMNIST (which is $28\times 28$) is resized to $32\times 32$ and architecture in \Cref{tab:arch_32} was used.
Batch size for training is fixed with 64. $\beta_h$ is fixed with $10.0$ for our experiments.

\begin{table}[h]
\centering
\footnotesize
\begin{adjustbox}{max width=\columnwidth}
\begin{tabular}{l l}
\toprule
\textbf{Encoder} & \textbf{Decoder}\\
\midrule
$4 \times 4$ conv 32,ReLU, stride 2        & $\text{input dim} \times 256$ fully connected, ReLU\\ 
$4 \times 4$ conv 32,ReLU, stride 2        & $256 \times 64*4*4$ fully conncted, ReLU\\
$4 \times 4$ conv 64,ReLU, stride 2        & $4\times 4$ conv transpose 64, ReLU, stride 2\\ 
$4 \times 4$ conv 64,ReLU, stride 2        & $4\times 4$ conv transpose 32, ReLU, stride 2\\
$64*4*4\times256$ fully connected, ReLU    & $4\times 4$ conv transpose 32, ReLU, stride 2\\
$256\times \text{output dim}$ fully connected& $4\times 4$ conv transpose 1, Sigmoid, stride 2\\
\bottomrule
\end{tabular}
\end{adjustbox}
\label{tab:arch_64}
\caption{Encoder and decoder architecture for Dsprites and Chairs data}
\end{table}

\begin{table}[h]
\centering
\footnotesize
\begin{adjustbox}{max width=\columnwidth}
\begin{tabular}{l l}
\toprule
\textbf{Encoder} & \textbf{Decoder}\\
\midrule
$4 \times 4$ conv 32,ReLU, stride 2        & $\text{input dim} \times 256$ fully connected, ReLU\\ 
$4 \times 4$ conv 32,ReLU, stride 2        & $256 \times 64*4*4$ fully conncted, ReLU\\
$4 \times 4$ conv 64,ReLU, stride 2        & $4\times 4$ conv transpose 32, ReLU, stride 2\\ 
$64*4*4\times256$ fully connected, ReLU     & $4\times 4$ conv transpose 32, ReLU, stride 2\\
$256\times \text{output dim}$ fully connected& $4\times 4$ conv transpose 1, Sigmoid, stride 2\\
\bottomrule
\end{tabular}
\end{adjustbox}
\label{tab:arch_32}
\caption{Encoder and decoder architecture for MNIST and FashionMNIST}
\end{table}

\subsection{dSprites}
\begin{itemize}
\item Dimension of discrete : 3
\item Optimizer: Adam with learning rate 3e-4
\item $\lambda'$ : 0.001
\item $r$ : 2e4
\item $t_d$ : 1e5
\item Iterations : 3e5
\end{itemize}

\subsection{MNIST}
\begin{itemize}
\item Dimension of discrete : 10
\item Optimizer : Adam with learning rate 3e-4
\item $\lambda'$ : 0.1
\item $r$ : 1e4
\item $t_d$ : 0
\item Iterations : 1.2e5
\end{itemize}

\subsection{FashionMNIST}
\begin{itemize}
\item Dimension of discrete : 10
\item Optimizer : Adam with learning rate 1e-4
\item $\lambda'$ : 0.1
\item $r$ : 1e4
\item $t_d$ : 0
\item Iterations : 1.2e5
\end{itemize}

\subsection{Chairs}
\begin{itemize}
\item Dimension of discrete : 3
\item Optimizer: Adam with learning rate 1e-4
\item $\lambda'$ : 0.01
\item $r$ : 2e4
\item $t_d$ : 6e4
\item Iterations : 1.5e5
\end{itemize}

\section*{C. Disentanglement score}
We follow the disentanglement score details from \cite{factorvae} and \cite{jointvae}. At first, we prune out all latent dimensions where variational posterior collapses to the priror. Concretely, we prune the continous latent dimension $z_j$ where 
\begin{align*}
\mathbb{E}_{x \sim p(x)}D_\text{KL}(q_\phi(z_j\mid x)\parallel p(z_j))<0.1~~.
\end{align*}
We evaluate disentanglement score with the surviving dimensions. We choose a factor $k$ from $K$ factors (\ie~ postion x, position y, rotation, scale, shape). Then, we obtain the representations from $L$ ($=100$) data of which factor $k$ is fixed and the other factors are randomly chosen. We take the empirical variance of each latent dimensions and normalize with each empirical variance over the full data\footnote{\text{We denote the empirical variance of latent dimension $j$ on full data, $v_j$.}}.
Concretely, the empirical variance on $j$ latent dimension \footnote{For convenience, $z_{m+1}=d$.}, is defined as 
\begin{align*}
    \widehat{\text{Var}}_j = \frac{1}{2N(N-1)} \sum_{p,q=1}^N d(x_p, x_q),
\end{align*} 
where $d(x_p, x_q) = \begin{cases} \mathbb{I}(x_p\neq x_q)& \text{if $j\!=\! m+1$} \\ (x_p-x_q)^2 & \text{otherwise} \end{cases}$. 
This procedure generates a vote $(j,k)$ where 
\begin{align*}
    j=\argmin_{j^*} \frac{1} {v_{j^*}}\widehat{\text{Var}}_{j^*}.
\end{align*}
We generate $M$ ($=800$) votes $(a_i,b_i)_{i=1}^M$. Let $V_{jk}=\sum_{i=1}^M \mathbb{I}(a_i=j,b_i=k)$. Concretely, the disentanglement score is 
\begin{align*}
    \frac{1}{M} \sum_{j} \max_k V_{jk}.
\end{align*}
Random chance algoirhtm takes $\frac{1}{K}$ as a accuracy.

\end{document}